\newtheorem{definition}{Definition}
\newtheorem{lemma}{Lemma}
\newtheorem{theorem}{Theorem}
\newtheorem{proposition}{Proposition}
\newcommand{\N}{\mathfrak{N}}
\newcommand{\agg}{\texttt{agg}\xspace}
\newcommand{\upd}{\texttt{upd}\xspace}
\newcommand{\AGG}{\textsc{Agg}\xspace}
\newcommand{\UPD}{\textsc{Upd}\xspace}
\newcommand{\etal}{\textit{et~al}.}
\newcommand{\norm}[2][2]{\lVert#2\rVert_{#1}}
\renewcommand{\paragraph}[1]{\vspace{1ex}\noindent{\bf #1}}
\def\name{{GAP}\xspace}
\newenvironment{revised}{\color{black}}{\color{black}}
\begin{document}

\date{}

\title{\Large \bf \name: Differentially Private Graph Neural Networks\\with Aggregation Perturbation}

\author{
{\rm Sina Sajadmanesh$^{1,2}$\quad Ali Shahin Shamsabadi$^3$\quad Aurélien Bellet$^4$\quad  Daniel Gatica-Perez$^{1,2}$}\\
$^1$Idiap Research Institute\quad $^2$EPFL\quad $^3$The Alan Turing Institute\quad $^4$Inria
} 

\maketitle

\begin{abstract}


\begin{revised}
In this paper, we study the problem of learning Graph Neural Networks (GNNs) with Differential Privacy (DP). We propose a novel differentially private \textbf{G}NN based on \textbf{A}ggregation \textbf{P}erturbation (GAP), which adds stochastic noise to the GNN's aggregation function to statistically obfuscate the presence of a single edge (edge-level privacy) or a single node and all its adjacent edges (node-level privacy). Tailored to the specifics of private learning, GAP's new architecture is composed of three separate modules: (i) the encoder module, where we learn private node embeddings without relying on the edge information; (ii) the aggregation module, where we compute noisy aggregated node embeddings based on the graph structure; and (iii) the classification module, where we train a neural network on the private aggregations for node classification without further querying the graph edges.
GAP's major advantage over previous approaches is that it can benefit from multi-hop neighborhood aggregations, and
guarantees both edge-level and node-level DP not only for training, but also at inference with no additional costs beyond the training's privacy budget. We analyze GAP's formal privacy guarantees using Rényi DP and conduct empirical experiments over three real-world graph datasets. We demonstrate that GAP offers significantly better accuracy-privacy trade-offs than state-of-the-art DP-GNN approaches and naive MLP-based baselines. 
\end{revised}
Our code is publicly available at \href{https://github.com/sisaman/GAP}{https://github.com/sisaman/GAP}.
\end{abstract}

\section{Introduction}\label{sec:intro}

Real-world datasets are often represented by graphs, such as social~\cite{qiu2018deepinf}, financial~\cite{wang2021review}, transportation~\cite{diao2019dynamic}, or biological~\cite{kearnes2016molecular} networks, modeling the relations (i.e., edges) between a collection of entities (i.e., nodes).
Graph Neural Networks (GNNs) have achieved state-of-the-art performance in learning over such relational data in various graph-based machine learning tasks, such as node classification, link prediction, and graph classification~\cite{kipf2017semi, zhang2018link, xu2018how}. 
Due to their superior performance, GNNs are now widely used in many applications, such as recommendation systems, credit issuing, traffic forecasting, drug discovery, and medical diagnosis \cite{ying2018graph, gkalelis2021objectgraphs, jiang2021graph, mohammadshahi-henderson-2020-graph, 9378164}. 

\paragraph{Privacy concerns.}
Despite their success, real-world deployments of GNNs raise privacy concerns when graphs contain personal data: for instance, social or financial networks involve sensitive information about individuals and their interactions.
Recent works~\cite{olatunji2021membership, he2021node, he2021stealing, wu2021linkteller} have extended the study of the privacy leakage of standard deep learning models to GNNs, showing the risk of information leakage regarding training data is even higher in GNNs, as they incorporate not only node features and labels but also the graph structure itself~\cite{duddu2020quantifying}. Consequently, GNNs are vulnerable to various privacy attacks, such as node membership inference~\cite{olatunji2021membership, he2021node} and edge stealing~\cite{he2021stealing, wu2021linkteller}.  For example, a GNN trained on a social network for friendship recommendation could reveal the existing relationships between the users via its predictions. 
As another example, a GNN trained on the social graph of COVID-19 patients can be used by government authorities to predict the spread of the disease, but an adversary may recover private information about the participating patients.

\begin{figure*}
    \centering
    \includegraphics[width=0.8\textwidth]{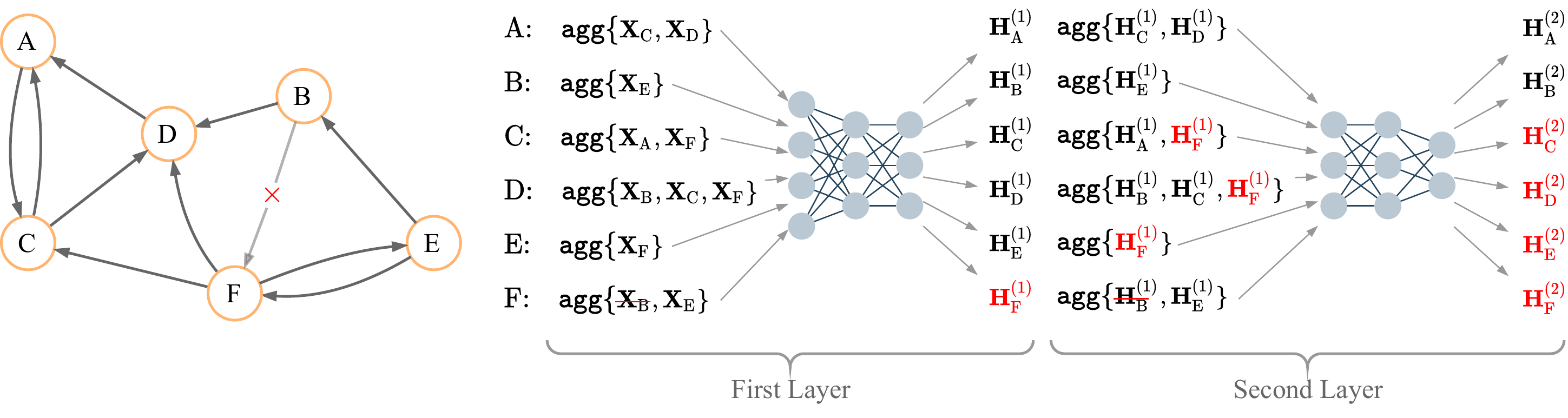}
    \caption{Schema of an unfolded 2-layer GNN taking an example graph as input. At each layer, every node aggregates its neighbors' embedding vectors (initially node features, e.g. $\mathbf{X}_\text{A}$ for node A), which is then updated using a neural net into a new vector (e.g., $\mathbf{H}_\text{A}$). Removing an arbitrary edge (here, the edge from node B to F) excludes the source node (B) from the aggregation set of the destination node (F). At the first layer, this will only alter the destination node's embedding, but this change is propagated to the neighboring nodes in the next layer. Node embeddings that are affected by the removal of edge (B,F) are indicated in red.}
    \label{fig:unfolded}
\end{figure*}

\paragraph{Problem and motivation.}
Motivated by these privacy concerns, we investigate the problem of designing privacy-preserving GNNs for private, sensitive graphs. 
Our goal is to 
protect the sensitive graph structure and other accompanying data using the framework of Differential Privacy (DP)~\cite{dwork2008differential}.
In the context of graphs, two different variants of DP have been defined: \emph{edge-level} and \emph{node-level} DP~\cite{raskhodnikova2016differentially}. Informally, an edge-level $\epsilon$-DP algorithm have roughly the same output (as measured by $\epsilon$) if one edge is removed from the input graph. This ensures that the algorithm's output does not reveal the existence of a particular edge in the graph. Correspondingly, node-level private algorithms conceal the presence of a particular node together with all its associated edges and attributes. Clearly, node-level DP is a stronger privacy definition, but it is harder to attain because it requires the algorithm's output distribution to hide much larger differences in the input graph.

\paragraph{Challenges.}
\begin{revised}
As GNNs utilize the structural information in the graph data, protecting data privacy in such models is more challenging than in standard ones. As shown in~\autoref{fig:unfolded}, one of these challenges is the interdependency between the node embeddings resulting from the GNN's data aggregation mechanism. Specifically, a $K$-layer GNN iteratively learns node embeddings by aggregating information from every node's $K$-hop neighborhood (i.e., from nodes that are at a distance at most $K$ in the graph). Hence, the embedding of a node is influenced not only by the node itself but also by all the nodes in its $K$-hop proximity. This fact voids the privacy guarantees of standard DP learning paradigms, such as DP-SGD~\cite{abadi2016deep}, as the training loss of GNNs can no longer be decomposed into individual samples. Furthermore,  the number of interdependent embeddings grows exponentially with $K$, hindering the ability of a DP solution to hide the output differences effectively. Therefore, how to get more representational power from higher-order GNN aggregations while ensuring DP is an important challenge to address. 

Another major challenge is to guarantee \emph{inference privacy}, i.e., preserving the privacy of graph data not only for training but also at inference time, when the trained GNN model is queried to make predictions for test nodes. Unlike conventional deep learning models, where the training data is not reused at inference time, the inference about any node in a $K$-layer GNN requires aggregating data from its $K$-hop neighborhood, which can reveal information about the neighboring nodes. Therefore, private graph data can still be leaked at inference time, even with privately trained model parameters. As a result, it is critical to ensure that both the training and inference stages of a GNN satisfy DP. This is illustrated in \autoref{fig:dp-comparison}.

\end{revised}

\begin{figure}[t]
    \centering
    \begin{subfigure}[b]{0.99\columnwidth}
        \centering
        \includegraphics[width=\textwidth]{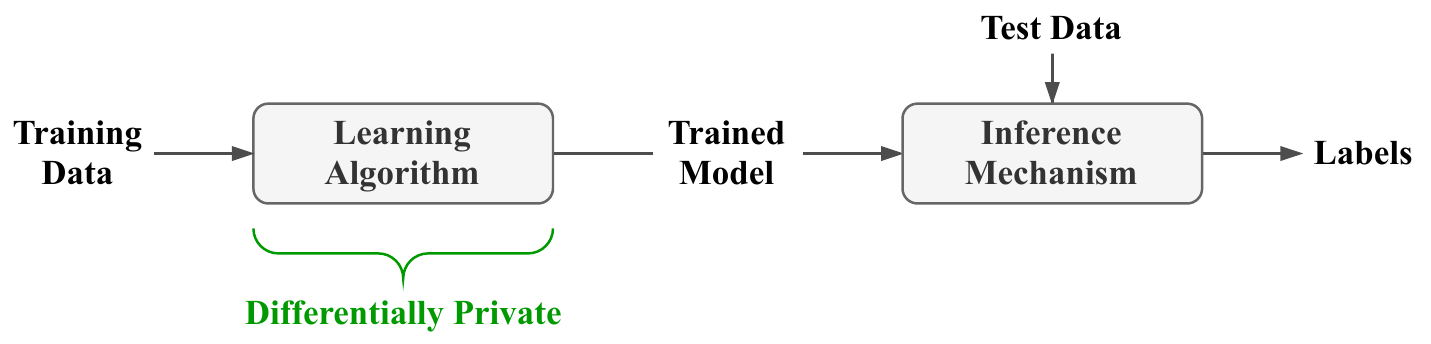}
        \caption{Learning standard DNNs with DP}
    \end{subfigure} 
    \\[0.5cm]
    \begin{subfigure}[b]{0.99\columnwidth}
        \centering
        \includegraphics[width=\textwidth]{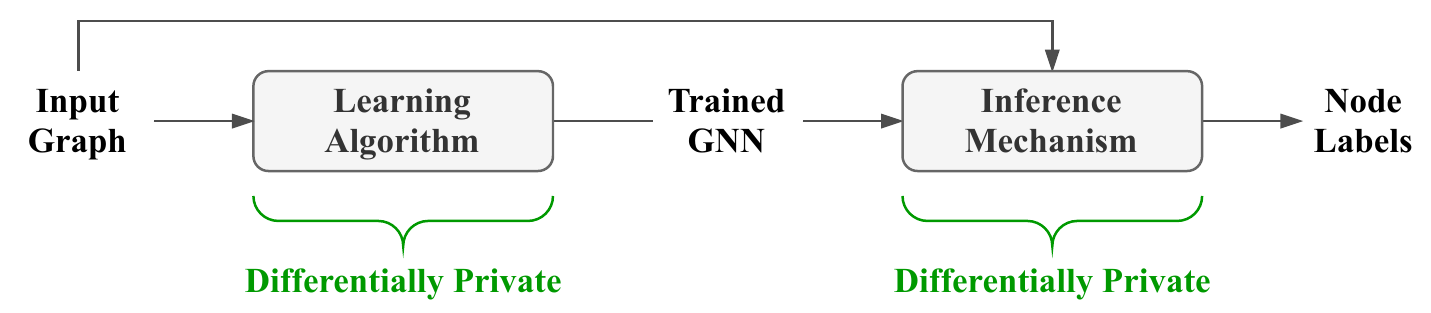}
        \caption{Learning GNNs with DP}
    \end{subfigure} 
    \caption{\begin{revised}Comparison of DP learning with (a) conventional deep neural networks, and (b) graph neural networks. Given the trained model, the inference mechanism of a DNN is independent of the training data, so a DP learning algorithm implies a DP inference mechanism as well. With GNNs however, graph data is queried again at inference time, so the inference step requires specific attention to be made differentially private.\end{revised}}
    \label{fig:dp-comparison}
\end{figure}

\paragraph{Our contributions.}
\begin{revised}To address the above challenges, we propose GAP, a privacy-preserving GNN model satisfying edge-level privacy, which is also extensible to node-level privacy if combined with standard private learning algorithms such as DP-SGD. As perturbing an edge in the input graph can practically be viewed as changing a sample in a node's neighborhood aggregation set, GAP preserves edge privacy via \emph{aggregation perturbation}: we add calibrated Gaussian noise to the output of the aggregation function, which can effectively hide the presence of a single edge (edge-level privacy) or a group of edges (node-level privacy).
To avoid accumulating privacy costs at every model update, we propose a custom GNN architecture (\autoref{fig:architecture}) comprising three individual components: (i) the encoder module, where we  pre-train an encoder to extract lower-dimensional node features without relying on the graph structure; (ii) the aggregation module, where we use aggregation perturbation to privately compute multi-hop aggregated node embeddings using the graph edges and the encoded features; and (iii) the classification module, where we train a neural network on the aggregated data for node classification without further querying the graph edges. 

Aggregation perturbation allows us to benefit from higher-order, multi-hop aggregations by composing individual noisy aggregations, yet the proposed architecture significantly reduces the privacy costs as the perturbed aggregations are computed once on lower-dimensional embeddings, and reused during training and inference. 
GAP also provides inference privacy, as the inference of any node relies on the perturbed aggregations, which hide information about neighboring nodes.
Due to reusing cached aggregations, the inference step does not incur additional privacy costs beyond that of training.
\end{revised}

\paragraph{Results.}
\begin{revised}
We analyze GAP's formal privacy guarantees using Rényi Differential Privacy~\cite{mironov2017renyi}, and empirically evaluate its accuracy-privacy performance on three medium to large-scale graph datasets, namely Facebook, Reddit, and Amazon. We demonstrate that \name's accuracy surpasses the competing baselines' at (very) low privacy budgets under both edge-level DP (e.g., $\epsilon\ge0.1$ on Reddit) and node-level DP (e.g., $\epsilon\ge1$ on Reddit), and observe that it always performs on par or better than a naive (privately trained) MLP model which does not utilize the graph's structural information.
\end{revised}

\begin{figure}[t]
    \centering
    \includegraphics[width=0.8\columnwidth]{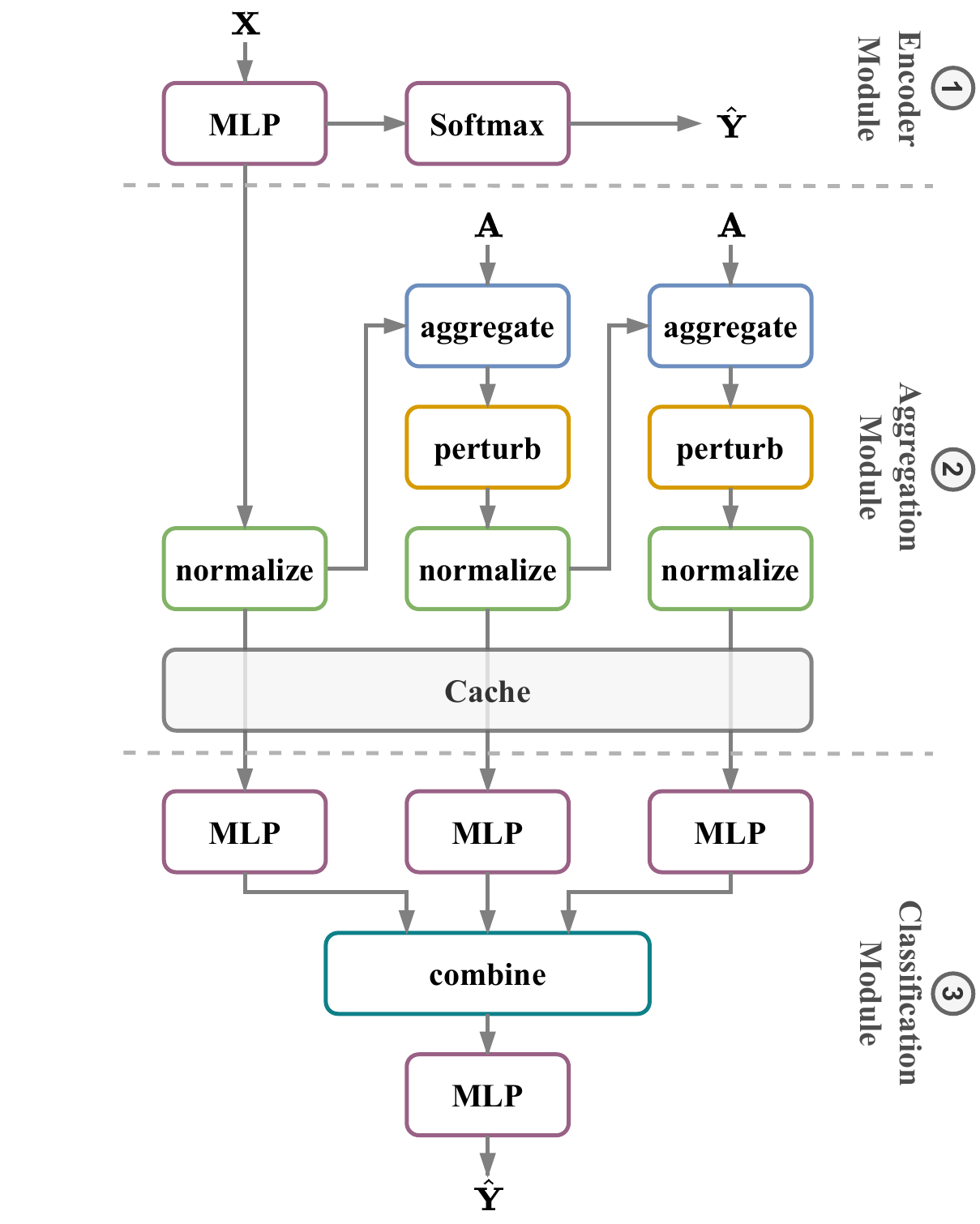}
    \caption{Overview of \name's architecture: (1) The encoder is trained using only node features ($\mathbf{X}$) and labels ($\mathbf{Y}$). (2) The encoded features are given to the aggregation module to compute private $K$-hop aggregations (here, $K=2$) using the graph's adjacency matrix ($\mathbf{A}$). (3) The classification module is trained over the private aggregations for label prediction.}
    \label{fig:architecture}
\end{figure}
\section{Related Work}\label{sec:related}

\paragraph{Graph neural networks.}
Deep learning on graphs has emerged in the past few years to tackle different kinds of graph-based learning tasks. A variety of GNN models and various architectures have been proposed, including Graph Convolutional Networks \cite{kipf2017semi}, Graph Attention Networks \cite{velivckovic2017graph}, GraphSAGE \cite{hamilton2017inductive}, Graph Isomorphism Networks \cite{xu2018how}, Jumping Knowledge Networks \cite{pmlr-v80-xu18c}, and Gated Graph Neural Networks \cite{li2015gated}. For the latest advances and trends in GNNs, we refer the reader to the available surveys~\cite{hamilton2017representation, wu2020comprehensive, 9039675, zhou2020graph, abadal2021computing}.

\paragraph{Privacy attacks on GNNs.}
Several recent works have investigated the possibility of performing privacy attacks against GNNs and quantified the privacy leakage of publicly released GNN models or node embeddings trained on private graph datasets. 
Zhang~\etal~\cite{277160} study the information leakage in graph embeddings and propose three different inference attacks against GNNs: inferring graph properties (such as number of nodes and edges), inferring whether a given subgraph is contained in the target graph, and graph reconstruction with similar statistics to the target graph.
He \etal~\cite{he2021stealing} propose a series of black-box link stealing attacks on GNN models, and show that an adversary can accurately infer a link between any pair of nodes in a graph used to train the GNN. 
Zhang~\etal~\cite{ijcai2021-516}
study the connection between model inversion risk and edge influence, and show that edges with greater influence are more likely to be inferred.
Wu~\etal~\cite{wu2021linkteller} also study the link stealing attack via influence analysis, and propose an effective attack against GNNs based on the node influence information.
The feasibility of the membership inference attack against GNNs has also been studied and several attacks with different threat models have been proposed in the literature~\cite{duddu2020quantifying, olatunji2021membership, he2021node, wypy2021miagnn}.
Overall, these works underline the privacy risks of GNNs trained on sensitive graph data and confirm the vulnerability of these models to various privacy attacks.

\paragraph{Differentially private GNNs.}
Recently, there have been attempts to use DP to provide formal privacy guarantees in various GNN learning settings.
Sajadmanesh and Gatica-Perez~\cite{sajadmanesh2021locally} propose a locally private GNN model by considering a distributed learning setting, where node features and labels are private but training the GNN is federated by a central server with access to graph edges. 
However, their method cannot be used in applications where the graph edges are private.
Wu~\etal~\cite{wu2021linkteller} propose an edge-level DP learning algorithm for GNNs by perturbing the input graph directly using either randomized response (called \textsc{EdgeRand}) or the Laplace mechanism (called \textsc{LapGraph}). Then, a GNN is trained over the resulting noisy graph. However, their method cannot be extended trivially to the node-level privacy setting.
Olatunji~\etal~\cite{olatunji2021releasing} consider a centralized learning setting and propose a node-level private GNN by adapting the framework of PATE~\cite{papernot2016semi}. They train the student GNN model using public graph data, which is privately labeled using the teacher GNN models trained exclusively for each query node. However, their dependence on public graph data restricts the applicability of their method.
Daigavane~\etal~\cite{daigavane2021node} also propose a node-level private approach for training 1-layer GNNs by extending the standard DP-SGD algorithm and privacy amplification by subsampling results to bounded-degree graph data. However, their approach fails to provide inference privacy and is limited to 1-layer GNNs and thus cannot leverage higher-order aggregations. 


\paragraph{Comparison with existing methods.}
To our best knowledge, GAP is the first approach providing both edge-level or node-level privacy guarantees based on the application requirements.
Unlike existing methods, our approach does not rely on public data, can leverage multi-hop aggregations beyond first-order neighbors, and guarantees inference privacy at no additional cost. In \autoref{sec:exp}, we also show that \name{} outperforms other baselines in terms of accuracy-privacy trade-off. 

\section{Background and Problem Formulation}\label{sec:basics}

\subsection{Graph Neural Networks}\label{sec:gnns}

GNNs aim to learn a representation for every node in the input graph by incorporating the initial node features and the graph structure (edges). The learned node representations, or embeddings, can then be used for the downstream machine learning task. 
In this paper, we focus on node classification, where the embeddings are used to predict the label of the graph nodes.
Node-wise prediction problems can be tackled in either transductive or inductive setting. In the transductive setting, both training and testing are performed on the same graph, but different nodes are used for training and testing. Conversely, in the inductive setting, training and testing are performed on different graphs. This is illustrated in~\autoref{fig:settings}.

\begin{figure}[t]
    \centering
    \begin{subfigure}[b]{0.4\columnwidth}
        \centering
        \includegraphics[width=\textwidth]{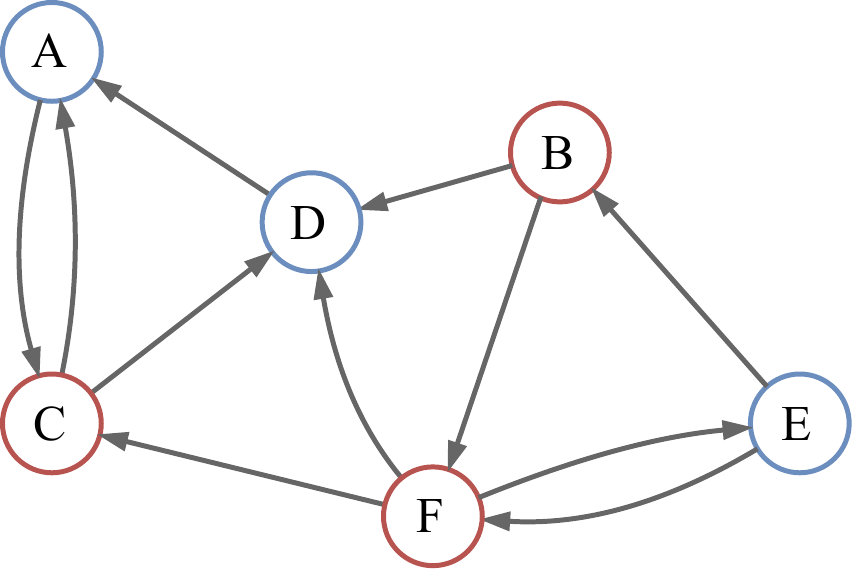}
        \caption{Transductive Learning}
		\label{fig:transductive}
    \end{subfigure} 
    \hfil
    \begin{subfigure}[b]{0.5\columnwidth}
        \centering
        \includegraphics[width=\textwidth]{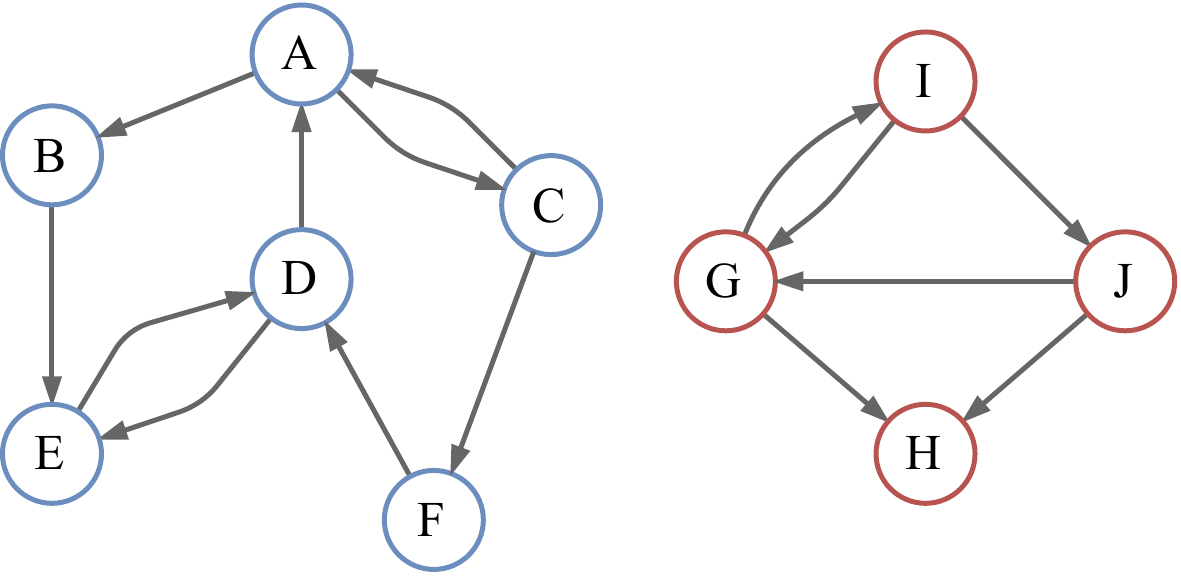}
        \caption{Inductive Learning}
		\label{fig:inductive}
    \end{subfigure} 
    \caption{\begin{revised}(a) Transductive learning: training and inference steps are conducted on the same graph, but different nodes are used for training and testing. Here, the blue nodes (A, D, and E) are used for training and the red nodes (B, C, and F) for inference. (b) Inductive learning: training and inference steps are performed on different graphs. Here, the left and right graphs are used for training and inference, respectively.\end{revised}}
    \label{fig:settings}
\end{figure}

Let $\mathcal{G}=(\mathcal{V}, \mathcal{E}, \mathbf{X}, \mathbf{Y})$ be an unweighted directed graph dataset consisting of sets of nodes  $\mathcal{V}$ and edges $\mathcal{E}$ represented by a binary adjacency matrix $\mathbf{A}\in\{0,1\}^{N\times N}$, where $N=\vert\mathcal{V}\vert$ denotes the number of nodes, and $\mathbf{A}_{i,j}=1$ if there is a directed edge $(i,j)\in\mathcal{E}$ from node $i$ to node $j$. Nodes are characterized by $d$-dimensional feature vectors stacked up in an $N\times d$ matrix $\mathbf{X}$, where $\mathbf{X}_v$ denotes the feature vector of the $v$-th node. $\mathbf{Y}\in\{0,1\}^{N\times C}$ represents the labels of the nodes, where $\mathbf{Y}_v$ is a $C$-dimensional one-hot vector denoting the label of the $v$-th node, and $C$ is the number of classes. Note that in the transductive learning setting, only a subset $\mathcal{V}_T\subset\mathcal{V}$ of the nodes is labeled, and thus $\mathbf{Y}_v$ is a zero vector for all $v\notin\mathcal{V}_T$.

A typical $K$-layer GNN consists of $K$ sequential graph convolution layers. Layer $i$ receives node embeddings from layer $i-1$ and outputs a new embedding for each node by aggregating the current embeddings of its adjacent neighbors followed by a learnable transformation, as defined below:
\begin{align*}
	\mathbf{H}^{(i)}_{v}     & = \upd \left(\agg \left(\{\mathbf{H}_u^{(i-1)}: \forall u \in \N_v\}\right); \mathbf{\Theta}^{(i)}\right),
\end{align*}
where $\N_v = \{u: \mathbf{A}_{u,v} \neq 0 \}$ denotes the set of adjacent nodes to node $v$ (i.e., nodes with outbound edges toward $v$), and $\mathbf{H}_{u}^{(i-1)}$ is the embedding of an adjacent node $u$ at layer $i-1$. $\agg(\cdot)$, is a (sub)differentiable, permutation invariant aggregator function, such as \textsc{Sum}, \textsc{Mean}, or \textsc{Max}. Finally, $\upd(\cdot)$ is a learnable function, such as a multi-layer perceptron (MLP), parameterized by $\mathbf{\Theta}^{(i)}$ that takes the aggregated vector and outputs the new embedding $\mathbf{H}^{(i)}_{v}$.
For convenience, we define the matrix-based version of $\agg(\cdot)$ and $\upd(\cdot)$ by stacking the corresponding vectors of all the nodes into a matrix as:
\begin{align*}
	&\AGG(\mathbf{H}, \mathbf{A}) = \left[\agg\left(\{\mathbf{H}_u: \forall u \in \N_v\}\right): \forall v\in\mathcal{V}\right]^T, \\
	&\UPD(\mathbf{M}; \mathbf{\Theta}) = \left[\upd\left(\mathbf{M}_{v}; \mathbf{\Theta}\right): \forall v\in\mathcal{V}\right]^T,
\end{align*}
where we omitted the layer indicator superscripts for simplicity. 
Initially, we have $\mathbf{H}^{(0)} = \mathbf{X}$ (i.e., node features) as the input to the GNN's first layer. The last layer generates an output embedding vector for each node, which can be used in different ways depending on the downstream task. For node classification, a softmax layer is applied to the final embeddings $\mathbf{H}^{(K)}$ to obtain the posterior class probabilities $\widehat{\mathbf{Y}}$. The illustration of a typical 3-layer GNN is depicted in \autoref{fig:gnn}. 

\begin{figure}[t]
    \centering
    \includegraphics[width=.8\columnwidth]{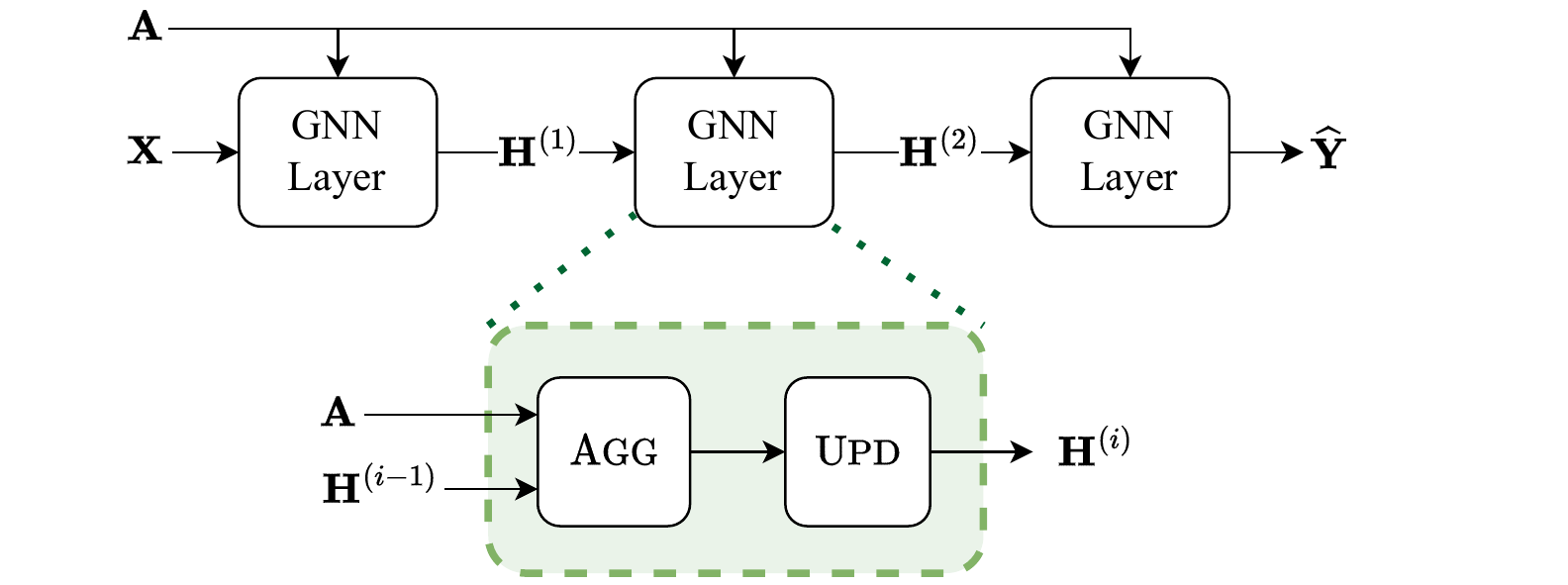}
    \caption{Typical 3-layer GNN for node classification. Each layer $i$ takes the adjacency matrix $\mathbf{A}$ and previous layer's node embedding matrix $\mathbf{H}^{(i-1)}$ (initially, node features $\mathbf{X}$), and outputs a new embedding matrix $\mathbf{H}^{(i)}$ (ultimately, predicted class labels $\widehat{\mathbf{Y}}$). Internally, the input embeddings $\mathbf{H}^{(i-1)}$ are aggregated based on the adjacency matrix $\mathbf{A}$, and then fed to a neural network (\UPD{}) to generate new embeddings $\mathbf{H}^{(i)}$.
	}
    \label{fig:gnn}
\end{figure}

\subsection{Differential Privacy}

Differential privacy (DP)~\cite{dwork2006calibrating} is the gold standard for formalizing the privacy guarantees of algorithms that process sensitive data. 
Informally, DP requires that the algorithm's output distribution be roughly the same regardless of the presence of an individual's data in the dataset. As such, an adversary having access to the data of all but the target individual cannot distinguish whether the target's record is among the input data. The formal definition of DP is as follows.
\begin{definition}[Differential Privacy~\cite{dwork2006calibrating}]\label{def:dp}
	Given $\epsilon > 0$ and $\delta > 0$, a randomized algorithm $\mathcal{A}$ satisfies $(\epsilon,\delta)$-differential privacy, if for all possible pairs of adjacent datasets $X$ and $X^\prime$ differing by at most one record, denoted as $X\sim X^\prime$, and for any possible set of outputs $S\subseteq Range(\mathcal{A})$, we have:
	\begin{equation*}
		\Pr[\mathcal{A}(X) \in S] \le e^\epsilon\Pr[\mathcal{A}(X^\prime) \in S] + \delta.
	\end{equation*}
\end{definition}
Here, the parameter $\epsilon$ is called the \emph{privacy budget} (or privacy cost) and is used to tune the privacy-utility trade-off of the algorithm: a lower privacy budget leads to stronger privacy guarantees but reduced utility. The parameter $\delta$ is informally treated as a failure probability, and is usually chosen to be very small. DP has the following important properties that help us design complex algorithms from simpler ones~\cite{dwork2008differential}:
\begin{itemize}[leftmargin=1em, noitemsep]
	\item \emph{Robustness to post-processing:} Any post-processing of the output of an $(\epsilon,\delta)$-DP algorithm remains $(\epsilon,\delta)$-DP.
	\item \emph{Sequential composition:} If an $(\epsilon,\delta)$-DP algorithm is applied $k$ times on the same data, the result is at most $(k\epsilon,k\delta)$-DP.
	\item \emph{Parallel composition:} Executing an $(\epsilon,\delta)$-DP algorithm on disjoint chunks of data yields an $(\epsilon,\delta)$-DP algorithm.
\end{itemize}

In this paper, we use an alternative definition of DP, called \emph{Rényi Differential Privacy} (RDP)~\cite{mironov2017renyi}, which allows obtaining tighter sequential composition results: 
\begin{definition}[Rényi Differential Privacy~\cite{mironov2017renyi}]
	A randomized algorithm $\mathcal{A}$ is $(\alpha,\epsilon)$-RDP for $\alpha > 1, \epsilon > 0$ if for every adjacent datasets $X\sim X^\prime$, we have $D_\alpha\left( \mathcal{A}(X) \Vert \mathcal{A}(X^\prime) \right) \le \epsilon$,
	where $D_\alpha(P\Vert Q)$ is the {Rényi divergence} of order $\alpha$ between probability distributions $P$ and $Q$ defined as:
	\begin{equation*}
		D_\alpha(P\Vert Q) = \frac{1}{\alpha-1}\log\mathbb{E}_{x\sim Q}\left[\frac{P(x)}{Q(x)}\right]^\alpha.
	\end{equation*}
\end{definition}

As RDP is a generalization of DP, it can be easily converted back to standard $(\epsilon,\delta)$-DP using the following proposition:
\begin{proposition}
\label{prop:rdptodp}
	If $\mathcal{A}$ is an $(\alpha,\epsilon)$-RDP algorithm, then it also satisfies $(\epsilon+\nicefrac{\log (1/\delta)}{\alpha-1},\delta)$-DP for any $\delta\in(0,1)$.
\end{proposition}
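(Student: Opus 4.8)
The plan is to fix a pair of adjacent datasets $X \sim X'$, abbreviate $P = \mathcal{A}(X)$ and $Q = \mathcal{A}(X')$, and set $\epsilon' = \epsilon + \frac{\log(1/\delta)}{\alpha-1}$. The goal is to verify the inequality $\Pr[P \in S] \le e^{\epsilon'}\Pr[Q\in S] + \delta$ of \autoref{def:dp} for every measurable output set $S$. The first step is to restate the RDP hypothesis in exponentiated, moment form: since $D_\alpha(P\Vert Q) \le \epsilon$, the definition of Rényi divergence gives $\mathbb{E}_{x\sim Q}\left[\left(\frac{P(x)}{Q(x)}\right)^\alpha\right] \le e^{(\alpha-1)\epsilon}$. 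This already presumes $P \ll Q$, which is implied by the finiteness of the divergence and should be stated as a standing assumption.

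Next I would isolate the region where the two distributions differ substantially. Define the bad set $B = \{x : P(x) > e^{\epsilon'} Q(x)\}$, i.e., where the privacy loss exceeds $\epsilon'$. For any event $S$, splitting on $B$ and its complement gives $\Pr[P\in S] = P(S\cap B) + P(S \cap B^c) \le P(B) + e^{\epsilon'} Q(S)$, where the bound on the second summand uses $P(x) \le e^{\epsilon'}Q(x)$ pointwise on $B^c$ together with $Q(S\cap B^c)\le Q(S)$. Hence the whole proposition reduces to the single claim $P(B) \le \delta$.

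The crux is a Markov-type tail bound for $P(B)$. On $B$ we have $\left(\frac{P(x)}{Q(x)}\right)^{\alpha-1} > e^{(\alpha-1)\epsilon'}$, which rearranges to $\frac{P(x)}{Q(x)} < e^{-(\alpha-1)\epsilon'}\left(\frac{P(x)}{Q(x)}\right)^{\alpha}$. Integrating against $Q$ over $B$ and then relaxing the domain of integration to the whole space yields $P(B) = \int_B \frac{P}{Q}\, dQ \le e^{-(\alpha-1)\epsilon'}\, \mathbb{E}_{x\sim Q}\left[\left(\frac{P(x)}{Q(x)}\right)^{\alpha}\right] \le e^{(\alpha-1)(\epsilon - \epsilon')}$, invoking the moment bound from the first step. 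Substituting $\epsilon' = \epsilon + \frac{\log(1/\delta)}{\alpha-1}$ makes the exponent $(\alpha-1)(\epsilon-\epsilon')$ equal to $\log\delta$, so $P(B) \le \delta$, which closes the argument.

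I expect the only delicate point to be the exponent bookkeeping in this final tail bound: the conceptual move of trading $\alpha$-th moment control of the likelihood ratio for a tail probability via the pointwise inequality $\frac{P}{Q} < e^{-(\alpha-1)\epsilon'}(P/Q)^\alpha$ on $B$ is exactly what generates the $\frac{\log(1/\delta)}{\alpha-1}$ slack term. The absolute-continuity caveat $P \ll Q$ is the one technical assumption worth flagging up front, since without it the hypothesis $D_\alpha(P\Vert Q)\le\epsilon$ would be vacuous.
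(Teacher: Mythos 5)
Your proof is correct. The paper does not actually prove this proposition---it is quoted as a background result from Mironov's R\'enyi DP paper---and your argument is precisely the standard one given there: split on the bad event $B=\{x: P(x)>e^{\epsilon'}Q(x)\}$, bound $P(S\cap B^c)\le e^{\epsilon'}Q(S)$ pointwise, and control $P(B)$ by a Markov-type tail bound on the $\alpha$-th moment of the likelihood ratio, with the exponent bookkeeping $(\alpha-1)(\epsilon-\epsilon')=\log\delta$ coming out exactly as you state. The absolute-continuity remark is also the right caveat to flag.
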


A basic method to achieve RDP is the \emph{Gaussian mechanism}, where Gaussian noise is added to the output of the algorithm we want to make private. Specifically, let $f: \mathcal{X}\rightarrow\mathbb{R}^d$ be the non-private algorithm taking a dataset as input and outputting a $d$-dimensional vector. Let the \emph{sensitivity} of $f$ be the maximum $L_2$ distance achievable when applying $f(\cdot)$ to adjacent datasets $X$ and $X^\prime$ as $\Delta_f = \max_{{X}\sim{X}^\prime} \left\Vert f({X}) - f({X}^\prime) \right\Vert_2$.
Then, adding Gaussian noise with variance $\sigma^2$ to $f$ as $\mathcal{A}(X) = f(X) + \mathcal{N}(\sigma^2\mathbb{I}_d)$,
with $\mathbb{I}_d$ being $d\times d$ identity matrix, yields an $(\alpha,\epsilon)$-RDP algorithm for all $\alpha>1$ with $\epsilon=\nicefrac{\Delta_f^2\alpha}{2\sigma^2}$ \cite{mironov2017renyi}.

\subsection{Problem Definition}

Let $\widehat{\mathbf{Y}} = \mathcal{F}(\mathbf{X},\mathbf{A}; \mathbf{\Theta})$ be a GNN-based node classification model with parameter set $\mathbf{\Theta}$ that takes node features $\mathbf{X}$ and the graph's adjacency matrix $\mathbf{A}$ as input, and outputs the corresponding predicted labels $\widehat{\mathbf{Y}}$.
To learn the model parameters $\mathbf{\Theta}$, we minimize a standard classification loss function (e.g., cross-entropy) with respect to $\mathbf{\Theta}$ as follows:
\begin{equation}\label{eq:train}
	\mathbf{\Theta}^\star = \arg\min_{\mathbf{\Theta}} \sum_{v\in\mathcal{V}_T} \ell(\widehat{\mathbf{Y}}_v, \mathbf{Y}_v),
\end{equation}
where $\ell(\cdot, \cdot)$ is the loss function, $\mathbf{Y}$ is the ground-truth labels, and $\mathcal{V}_T\subseteq\mathcal{V}$ is the set of labeled training nodes. After training, in the transductive setting, the learned GNN is used to infer the labels of unlabeled nodes in $\mathcal{G}$:
\begin{equation}\label{eq:inference}
	\widehat{\mathbf{Y}} = \mathcal{F}(\mathbf{X},\mathbf{A}; \mathbf{\Theta}^\star),
\end{equation}
Otherwise, in the inductive setting, a new graph dataset $\mathcal{G}_{test}$ is given to the learned GNN for label inference.

The goal of this paper is to preserve the privacy of graph datasets for both the training step (\autoref{eq:train}) and the inference step (\autoref{eq:inference}) using  differential privacy. Note that preserving privacy in the inference step is critical as the adjacency information is still used in this step for obtaining the predicted labels.

However, as graph datasets are different from standard tabular datasets due to the existence of links between data records, one needs to adapt the definition of DP to graphs. As the semantic interpretation of DP relies on the definition of adjacent datasets, we first define two different notions of adjacency in graphs, namely edge-level and node-level adjacent graph datasets~\cite{hay2009accurate}:
\begin{definition}[Edge-level adjacent graphs]
Two graphs $\mathcal{G}$ and $\mathcal{G}'$ are edge-level adjacent if one can be obtained by removing a single edge from the other. Therefore, $\mathcal{G}$ and $\mathcal{G}'$ differ by at most one edge.
\end{definition}

\begin{definition}[Node-level adjacent graphs]
Two graphs $\mathcal{G}$ and $\mathcal{G}'$ are node-level adjacent if one can be obtained by removing a single node (with its features, labels, and all attached edges) from the other. Therefore, $\mathcal{G}$ and $\mathcal{G}'$ differ by at most one node.
\end{definition}


Accordingly, the definition of edge-level and node-level DP follows from the above definitions: an algorithm $\mathcal{A}$ is edge-level (respectively, node-level) $(\epsilon, \delta)$-DP if for every two edge-level (respectively, node-level) adjacent graph datasets $\mathcal{G}$ and $\mathcal{G}^\prime$ and any set of outputs $S\subseteq Range(\mathcal{A})$, we have $\Pr[\mathcal{A}(\mathcal{G}) \in S] \le e^\epsilon\Pr[\mathcal{A}(\mathcal{G}^\prime) \in S] + \delta.$

Intuitively, edge-level DP protects edges (which could represent connections between people), while node-level DP protects nodes together with their adjacent edges (i.e., all information pertaining to an individual, including features, labels, and connections). 


\section{Proposed Method: \name}\label{sec:method}

In this section, we explain our proposed differentially private method, called GNN with Aggregation Perturbation (GAP), which guarantees both edge-level and node-level privacy for training and inference on sensitive graph data. 

\subsection{Overview}

\begin{revised}
As mentioned in~\autoref{sec:intro}, the two primary challenges in the design of private GNNs come from the use of higher-order aggregations and the need to ensure inference privacy. To tackle these challenges, we propose a new architecture for GAP, which is different from the conventional GNN architectures presented in~\autoref{sec:gnns}. The key distinction is that GAP decouples the graph-based aggregations from the neural network-based transformations, which is similar in spirit to the Inception model and scalable networks~\cite{szegedy2015going, wu2019simplifying, frasca2020sign}. As illustrated in~\autoref{fig:architecture}, GAP is composed of the following three components: 
\begin{enumerate}[leftmargin=2em,label=(\roman*)]
    \item \emph{Encoder Module (EM):} 
    This module encodes the input node features into a lower-dimensional representation without using the private graph structure. 
    
    \item \emph{Aggregation Module (AM):} This module takes the encoded low-dimensional node features and recursively computes private multi-hop aggregations using the \emph{aggregation perturbation} approach, i.e., by adding noise to the output of each aggregation step.
    
    \item \emph{Classification Module (CM):} This module takes the privately aggregated node features and predicts the corresponding labels without querying the edges any further.
\end{enumerate}

\paragraph{GAP's privacy mechanism.}
Our proposed mechanism for preserving the privacy of graph edges in AM is the \emph{aggregation perturbation} approach: we use the Gaussian mechanism to add stochastic noise to the output of the aggregation function proportional to its sensitivity. This approach is motivated by the fact that perturbing an edge in the input graph can practically be viewed as changing a sample in the neighborhood aggregation function of the edge's destination node. Therefore, by adding an appropriate amount of noise to the aggregation function, we can effectively hide the presence of a single edge, which ensures edge-level privacy, or a group of edges, which is necessary for node-level privacy. To fully guarantee node-level privacy, however, in addition to the edges, we need to also protect node features and labels, which is simply done by training EM and CM using standard DP learning algorithms such as DP-SGD. We discuss this point further in~\autoref{sec:priv}.

\paragraph{Challenges addressed.}
Our GAP method can benefit from multi-hop aggregations by composing individual noisy aggregation steps. As the sensitivity of a single-step aggregation is easily determined, AM applies the Gaussian mechanism immediately after each aggregation step, avoiding the growing interdependency between node embeddings.
GAP also provides inference privacy as the inference of a node relies on the aggregated data from its neighbors, which is privately computed by AM. As the subsequent CM only post-processes these private aggregations, GAP ensures inference-time privacy. This is explained in more details in~\autoref{sec:priv}.

In the rest of this section, we first discuss each of the GAP's components thoroughly and then describe the inference mechanism.

\end{revised}

\subsection{Encoder Module}\label{sec:em}

\begin{revised}
GAP uses a multi-layer perceptron (MLP) model as an encoder to transform the original node features into an intermediate representation given to AM. The main goal of this module is to reduce the dimensionality of AM's input, as the magnitude of the Gaussian noise injected into the aggregations grows with data dimensionality. Therefore, reducing the dimensionality helps achieve better aggregation utility under DP.

Note that in order to save the privacy budget spent in AM, we do not train the encoder end-to-end with CM. Instead, we attach a linear softmax layer to the encoder MLP for label prediction, and then pre-train this model separately using node features and labels. Specifically, we use the following model:
\end{revised}
\begin{equation}\label{eq:encmod}
	\widehat{\mathbf{Y}} = \text{softmax}\left(\text{MLP}_\text{enc}(\mathbf{X}; \mathbf{\Theta}_{\text{enc}})\cdot\mathbf{W}\right),
\end{equation}
where $\text{MLP}_\text{enc}$ is the encoder MLP with parameter set $\mathbf{\Theta}_{\text{enc}}$, $\mathbf{W}$ is the weight matrix of the linear softmax layer, $\mathbf{X}$ is the original node features, and $\widehat{\mathbf{Y}}$ is the corresponding posterior class probabilities. 
In order to train this model, we minimize the cross-entropy (or any other classification-related) loss function $\ell(\cdot, \cdot)$ with respect to the model parameters $\mathbf{\Theta} = \{ \mathbf{\Theta}_{\text{enc}}, \mathbf{W}\}$:
\begin{equation}\label{eq:loss}
	\mathbf{\Theta}^\star = \arg\min_{\mathbf{\Theta}} \sum_{v\in\mathcal{V}_T} \ell(\widehat{\mathbf{Y}}_v, \mathbf{Y}_v),
\end{equation}
where  $\mathbf{Y}$ is the ground-truth labels and $\mathcal{V}_T\subseteq\mathcal{V}$ is the set of training nodes. 
After pre-training, we use the encoder MLP to extract low-dimensional node features, $\mathbf{X}^{(0)}$, for AM:
\begin{equation}\label{eq:encoder}
	\mathbf{X}^{(0)} = \text{MLP}_\text{enc}(\mathbf{X}; \mathbf{\Theta}_{\text{enc}}^\star).
\end{equation}

\begin{revised}
\paragraph{Remark.} As will be discussed in~\autoref{sec:am}, this encoder pre-training approach significantly reduces the model's privacy costs as the private aggregations in AM no longer need to be updated with the encoder's parameters. Besides, compared to the original features, this approach provides better node features to AM as the encoded representations incorporate label information as well.

\end{revised}

\subsection{Aggregation Module}\label{sec:am}

The goal of AM is to privately release multi-hop aggregated node features using the aggregation perturbation method. \autoref{alg:pma} presents our mechanism, the Private Multi-hop Aggregation (PMA). It relies on the \textsc{Sum} aggregation function, which is simply equivalent to the multiplication of the adjacency matrix $\mathbf{A}$ by the input feature matrix $\mathbf{X}$, as $\AGG(\mathbf{X}, \mathbf{A}) = \mathbf{A}^T\cdot\mathbf{X}$.
The PMA mechanism takes $\check{\mathbf{X}}^{(0)}$, the row-normalized version of the encoder's extracted features as:
\begin{equation}\label{eq:normalized}
	\check{\mathbf{X}}^{(0)}_v = {\mathbf{X}^{(0)}_v}/{\norm{\mathbf{X}^{(0)}_v}},\quad\forall v\in\mathcal{V}.
\end{equation}
It then outputs a set of $K$ normalized, privately aggregated node features $\check{\mathbf{X}}^{(1)}$ to $\check{\mathbf{X}}^{(K)}$ corresponding to different hops from $1$ to $K$.
Specifically, given $\sigma>0$, the PMA mechanism performs the following steps to recursively compute and perturb the aggregations in $k\text{-th}$ hop from $(k-1)\text{-th}$:
\begin{enumerate}[leftmargin=1em, noitemsep]
	\item \textbf{Aggregation:} First, we compute $k$-th non-private aggregations using the normalized aggregations at step $k-1$:
	\begin{equation}
		\mathbf{X}^{(k)}=\mathbf{A}^T\cdot\check{\mathbf{X}}^{(k-1)}.
	\end{equation}
	\item \textbf{Perturbation:} Next, we perturb the aggregations using the Gaussian mechanism, i.e., by adding noise with variance $\sigma^2$ to every row of $\mathbf{X}^{(k)}$ independently:
	\begin{equation}
		\widetilde{\mathbf{X}}^{(k)}_v=\mathbf{X}^{(k)}_v + \mathcal{N}(\sigma^2\mathbb{I}), \quad\forall v\in\mathcal{V}.
	\end{equation}
	\item \textbf{Normalization:} Finally, it is essential to bound the effect of each feature vector on the subsequent aggregations. Therefore, we again row-normalize the private aggregated features, such that the L2-norm of each row is 1: 
	\begin{equation}
		\check{\mathbf{X}}^{(k)}_v = {\widetilde{\mathbf{X}}^{(k)}_v} / {||{\widetilde{\mathbf{X}}^{(k)}_v}||_2},\quad\forall v\in\mathcal{V}.
	\end{equation}
\end{enumerate}


\begin{revised}
\paragraph{Remark.} The recursive computation of aggregations in the PMA mechanism has one advantage: each aggregation step acts as a denoising mechanism, averaging out the DP noise added in the previous step (to some extent). Therefore, part of the injected noise is dampened by the PMA mechanism itself, leading to better aggregation utility. This noise-reducing effect of GNN aggregations is also observed in prior work~\cite{sajadmanesh2021locally}.

\paragraph{Effect of EM.} Note that EM plays a critical role in improving AM's privacy-utility trade-off: First, it increases the utility of noisy aggregations by reducing the dimensionality of AM's input, resulting in less noise added to the aggregations. Second, its pre-training strategy makes AM agnostic to model training, which remarkably reduces the total privacy costs as the PMA mechanism is called only once and its output is cached to be reused for entire training and inference. Technically, this implies that with $T$ training iterations, the Gaussian mechanism is composed only $K$ times, which would otherwise be $KT$ in the case of end-to-end training. Since $K$ is small ($1\le K\le5$) compared to $T$ (in the order of hundreds), this leads to a substantial reduction in the privacy budget.
\end{revised}

\begin{algorithm}[t]
    \caption{Private Multi-hop Aggregation}\label{alg:pma}
	\footnotesize
	\SetKwInOut{Input}{Input}
	\SetKwInOut{Output}{Output}
	\ResetInOut{Output}
	\LinesNumbered
	\Input{Graph $\mathcal{G}=(\mathcal{V}, \mathcal{E})$ with adjacency matrix $\mathbf{A}$; initial normalized features $\check{\mathbf{X}}^{(0)}$; max hop $K$; noise variance $\sigma^2$;}
	\Output{Private aggregated node feature matrices $\check{\mathbf{X}}^{(1)},\dots,\check{\mathbf{X}}^{(K)}$}
	\BlankLine
	\DontPrintSemicolon
	\For{$k\in\{1,\dots,K\}$} {
	    $\mathbf{X}^{(k)} \gets \mathbf{A}^T\cdot\check{\mathbf{X}}^{(k-1)}$ \tcp*{\color{RoyalBlue} aggregate}
		$\widetilde{\mathbf{X}}^{(k)} \gets \mathbf{X}^{(k)} + \mathcal{N}(\sigma^2\mathbb{I})$ \tcp*{\color{Dandelion} perturb\quad}
	    \For{$v\in\mathcal{V}$} {
    	    $\check{\mathbf{X}}^{(k)}_v \gets {\widetilde{\mathbf{X}}^{(k)}_v} / {||{\widetilde{\mathbf{X}}^{(k)}_v}||_2}$ \tcp*{\color{ForestGreen} normalize}
	    }
	}
	\Return  $\check{\mathbf{X}}^{(1)}, \dots, \check{\mathbf{X}}^{(K)}$\;
\end{algorithm}

\subsection{Classification Module}

Given the list of private aggregated features $\{\check{\mathbf{X}}^{(0)}, \dots,\check{\mathbf{X}}^{(K)}\}$ provided by AM, the goal of CM is to predict node labels without further relying on the graph edges. To this end, for each $k\in\{0,1,\dots,K\}$, we first obtain the $k$-hop representation $\mathbf{H}^{(k)}$ using a corresponding base MLP, denoted as $\text{MLP}_\text{base}^{(k)}$:
\begin{equation}\label{eq:mlpbase}
    \mathbf{H}^{(k)} = \text{MLP}_\text{base}^{(k)}(\check{\mathbf{X}}^{(k)}; \mathbf{\Theta}_\text{base}^{(k)}),
\end{equation}
where $\mathbf{\Theta}_\text{base}^{(k)}$ is the parameters of $\text{MLP}_\text{base}^{(k)}$.
Next, we combine these representations to get an integrated node embedding $\mathbf{H}$:
\begin{equation}\label{eq:comb}
    \mathbf{H} = \text{\textsc{Combine}}\left(\{\mathbf{H}^{(0)},\mathbf{H}^{(1)},\dots,\mathbf{H}^{(K)}\}; \mathbf{\Theta}_\text{comb}\right),
\end{equation}
where \textsc{Combine} is any differentiable combination strategy, with common choices being summation, concatenation, or attention, potentially with parameter set $\mathbf{\Theta}_\text{comb}$.
Finally, we feed the integrated representation into a head MLP, denoted as $\text{MLP}_\text{head}$, to get posterior class probabilities for the nodes:
\begin{equation}\label{eq:mlphead}
    \widehat{\mathbf{Y}} = \text{MLP}_\text{head}(\mathbf{H}; \mathbf{\Theta}_\text{head}),
\end{equation}
where $\mathbf{\Theta}_\text{head}$ denotes the parameters of $\text{MLP}_\text{head}$.
To train CM, we minimize a similar loss function as \autoref{eq:loss} but with respect to CM's parameters: $\mathbf{\Theta} = \{ \mathbf{\Theta}_\text{base}^{(0)}, \dots, \mathbf{\Theta}_\text{base}^{(K)}, \mathbf{\Theta}_\text{comb}, \mathbf{\Theta}_\text{head} \}$. The overall training procedure of GAP is presented in~\autoref{alg:gap}. 

\begin{revised}

\paragraph{Remark.} CM independently processes the information encoded in the graph-agnostic node features $\check{\mathbf{X}}^{(0)}$ and the private, graph-based aggregated features $\check{\mathbf{X}}^{(1)}$ to $\check{\mathbf{X}}^{(K)}$, combining them together to get an integrated node representation. Therefore, even if the DP noise overwhelms the signal in the higher-level aggregations, the information in the lower-level aggregations and/or the graph-agnostic features is still preserved and exploited for classification. As a result, regardless of the privacy budget, GAP is expected to always perform on par or better than pure MLP-based models that do not rely on the graph structure. We will empirically demonstrate this point in our experiments.

\subsection{Inference Mechanism}
GAP is compatible with both the transductive and the inductive inference, as discussed below. 

\paragraph{Transductive setting.}
In this setting, both training and inference are conducted on the same graph, but using different nodes for training and inference steps (\autoref{fig:transductive}). As the entire graph is available at training time, AM computes the private aggregations of all the nodes, including both training and test ones. Therefore, at inference time, we only give the cached aggregations of the test nodes to the trained CM to predict their labels. 

\paragraph{Inductive setting.}
Here, we use a new graph for inference different from the one used for training (\autoref{fig:inductive}). In this case, we first extract low-dimensional node features for the new graph using the pre-trained encoder and then feed them to AM to obtain the private aggregations. Finally, we input the private aggregations to the trained CM to get the node labels.

\end{revised}

\begin{algorithm}[t]
    \caption{GAP Training}
	\label{alg:gap}
	\footnotesize
	\SetKwInOut{Input}{Input}
	\SetKwInOut{Output}{Output}
	\ResetInOut{Output}
	\LinesNumbered
	\Input{Graph $\mathcal{G}=(\mathcal{V}, \mathcal{E})$ with adjacency matrix $\mathbf{A}$; node features $\mathbf{X}$; node labels $\mathbf{Y}$; max hop $K$; noise variance $\sigma^2$;}
	\Output{Trained model parameters $\{ \mathbf{\Theta^\star}_\text{enc}, \mathbf{\Theta^\star}_\text{base}^{(0)}, \dots, \mathbf{\Theta^\star}_\text{base}^{(K)}, \mathbf{\Theta^\star}_\text{comb}, \mathbf{\Theta^\star}_\text{head} \}$;
	}
	\BlankLine
	\DontPrintSemicolon
	Pre-train EM (\autoref{eq:encmod}) to obtain $\mathbf{\Theta^\star}_\text{enc}$.\;
	Use the pre-trained encoder (\autoref{eq:encoder}) to obtain encoded features $\mathbf{X}^{(0)}$.\;
	Row-normalize the encoded features (\autoref{eq:normalized}) to obtain $\check{\mathbf{X}}^{(0)}$.\;
	Use \autoref{alg:pma} to obtain private aggregations $\check{\mathbf{X}}^{(1)}, \dots, \check{\mathbf{X}}^{(K)}$.\;
	Train CM (\autoref{eq:mlpbase}-\ref{eq:mlphead}) to get $\mathbf{\Theta^\star}_\text{base}^{(0)}, \dots, \mathbf{\Theta^\star}_\text{base}^{(K)}, \mathbf{\Theta^\star}_\text{comb}$, $\mathbf{\Theta^\star}_\text{head}$.\;
	\Return  $\{\mathbf{\Theta^\star}_\text{enc}, \mathbf{\Theta^\star}_\text{base}^{(0)}, \dots, \mathbf{\Theta^\star}_\text{base}^{(K)}, \mathbf{\Theta^\star}_\text{comb}, \mathbf{\Theta^\star}_\text{head}\}$ \;
\end{algorithm}

\section{Privacy Analysis}\label{sec:priv}
\subsection{Edge-Level Privacy}\label{sec:edge-priv}

In the following, we provide a formal analysis of \name{}'s edge-level privacy guarantees at training and inference stages.

\paragraph{Training privacy.} The following arguments establish the DP guarantees of the PMA mechanism and the GAP training algorithm. The detailed proofs can be found in \autoref{sec:proof}.

\begin{theorem}\label{thm:privacy}
	Given the maximum hop $K\ge1$ and noise variance $\sigma^2$, the PMA mechanism presented in \autoref{alg:pma} satisfies edge-level $(\alpha, \nicefrac{K\alpha}{2\sigma^2})\text{-RDP}$ for any $\alpha > 1$.
\end{theorem}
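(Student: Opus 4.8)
The plan is to analyze the PMA mechanism as an adaptive sequential composition of $K$ Gaussian mechanisms, each querying the sensitive edge set, and then invoke RDP composition together with post-processing invariance. The first observation is that for edge-level privacy only the adjacency matrix $\mathbf{A}$ is sensitive: the initial features $\check{\mathbf{X}}^{(0)}$ are produced by the encoder, which never touches the graph structure, so they may be treated as fixed public data, identical across any pair of edge-level adjacent graphs $\mathcal{G}\sim\mathcal{G}'$.

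Next I would bound the $L_2$ sensitivity of a single aggregation step. Writing $\mathbf{X}^{(k)}=\mathbf{A}^T\check{\mathbf{X}}^{(k-1)}$, each row is $\mathbf{X}^{(k)}_v=\sum_{u\in\N_v}\check{\mathbf{X}}^{(k-1)}_u$. If $\mathcal{G}$ and $\mathcal{G}'$ differ in the single edge $(i,j)$, then only the aggregation of the destination node $j$ changes, and it changes by exactly $\pm\check{\mathbf{X}}^{(k-1)}_i$. Because the normalization step forces every row of $\check{\mathbf{X}}^{(k-1)}$ to have unit $L_2$ norm, this difference has norm $1$, and since only one row is affected the Frobenius norm of $\mathbf{X}^{(k)}-\mathbf{X}'^{(k)}$ is also $1$. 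Hence the per-step sensitivity is $\Delta=1$. Adding $\mathcal{N}(\sigma^2\mathbb{I})$ to every row is exactly the Gaussian mechanism with sensitivity $1$ and variance $\sigma^2$, which by the RDP guarantee quoted in the background gives $(\alpha,\nicefrac{\alpha}{2\sigma^2})$-RDP per hop for all $\alpha>1$.

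The main subtlety—and the step I expect to require the most care—is that for $k>1$ the input $\check{\mathbf{X}}^{(k-1)}$ to step $k$ is itself a noisy function of $\mathbf{A}$, so the sensitivity argument above cannot be applied to the steps in isolation. I would resolve this with adaptive RDP composition: treating the outputs $\check{\mathbf{X}}^{(1)},\dots,\check{\mathbf{X}}^{(k-1)}$ of the earlier hops as the auxiliary input to the $k$-th mechanism, one conditions on these outputs being the \emph{same} for both branches $\mathcal{G}$ and $\mathcal{G}'$. Once conditioned, $\check{\mathbf{X}}^{(k-1)}$ is a fixed matrix (a deterministic post-processing of the fixed auxiliary output) common to both graphs, and the unit-norm property from normalization guarantees that $\Delta=1$ holds at every hop regardless of $k$. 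Thus each of the $K$ steps is $(\alpha,\nicefrac{\alpha}{2\sigma^2})$-RDP conditioned on the past.

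Finally, I would combine the pieces: the adaptive composition theorem for RDP adds the per-step costs, yielding $(\alpha, K\cdot\nicefrac{\alpha}{2\sigma^2})=(\alpha,\nicefrac{K\alpha}{2\sigma^2})$-RDP for the sequence of perturbed aggregations $\widetilde{\mathbf{X}}^{(1)},\dots,\widetilde{\mathbf{X}}^{(K)}$. The normalization applied after each perturbation is a deterministic post-processing and so preserves the guarantee by robustness to post-processing. This establishes the claimed edge-level $(\alpha,\nicefrac{K\alpha}{2\sigma^2})$-RDP bound.
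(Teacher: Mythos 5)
Your proposal is correct and follows essentially the same route as the paper: a per-hop edge-level sensitivity bound of $1$ for the row-normalized \textsc{Sum} aggregation (the paper's \autoref{lem:e-sensitivity}), the Gaussian mechanism giving $(\alpha,\nicefrac{\alpha}{2\sigma^2})$-RDP per hop, and adaptive RDP composition over the $K$ hops, with normalization absorbed as post-processing. Your explicit conditioning argument for why the sensitivity bound survives the fact that $\check{\mathbf{X}}^{(k-1)}$ depends on $\mathbf{A}$ is a welcome elaboration of a step the paper handles implicitly by citing the adaptive composition result.
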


\begin{proposition}\label{prop:gap-edge}
	For any $\delta\in(0,1)$, maximum hop $K\ge1$, and noise variance $\sigma^2$, \autoref{alg:gap} satisfies edge-level $(\epsilon,\delta)$-DP with $\epsilon=\frac{K}{2\sigma^2} + \nicefrac{\sqrt{2K\log{(1/\delta)}}}{\sigma}$.
\end{proposition}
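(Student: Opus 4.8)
The plan is to isolate the single edge-dependent operation inside \autoref{alg:gap}, transfer the RDP guarantee of \autoref{thm:privacy} to the whole algorithm by post-processing, and then convert to $(\epsilon,\delta)$-DP while optimizing over the Rényi order $\alpha$.

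First I would argue that the PMA call is the only step of \autoref{alg:gap} that reads the adjacency matrix $\mathbf{A}$. The encoder pre-training (\autoref{eq:encmod}) and the extraction of $\mathbf{X}^{(0)}$ and $\check{\mathbf{X}}^{(0)}$ use only the node features $\mathbf{X}$ and labels $\mathbf{Y}$; likewise, training CM consumes only the cached aggregations $\check{\mathbf{X}}^{(1)},\dots,\check{\mathbf{X}}^{(K)}$ together with $\mathbf{X},\mathbf{Y}$. Under edge-level adjacency, two graphs $\mathcal{G}\sim\mathcal{G}'$ share identical $\mathbf{X}$ and $\mathbf{Y}$ and differ only in a single edge of $\mathbf{A}$. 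Consequently, for adjacent inputs the encoder parameters $\mathbf{\Theta}^\star_{\text{enc}}$, and hence the normalized features $\check{\mathbf{X}}^{(0)}$ fed into PMA, are identical, so the only channel through which the differing edge can influence the output of \autoref{alg:gap} is the PMA mechanism. This lets me view \autoref{alg:gap} as an edge-invariant pre-processing producing $\check{\mathbf{X}}^{(0)}$, followed by PMA (the sole private step), followed by an edge-invariant randomized post-processing that trains CM. Since RDP is closed under post-processing, \autoref{thm:privacy} immediately gives that \autoref{alg:gap} is edge-level $(\alpha, \nicefrac{K\alpha}{2\sigma^2})$-RDP for every $\alpha>1$.

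Next I would convert this RDP guarantee to standard DP. Applying \autoref{prop:rdptodp} with the per-order bound $\epsilon' = \nicefrac{K\alpha}{2\sigma^2}$ shows that, for any fixed $\delta\in(0,1)$ and any $\alpha>1$, \autoref{alg:gap} is $(\epsilon(\alpha),\delta)$-DP with
\begin{equation*}
	\epsilon(\alpha) = \frac{K\alpha}{2\sigma^2} + \frac{\log(1/\delta)}{\alpha-1}.
\end{equation*}
Because this holds simultaneously for all $\alpha>1$, I am free to minimize $\epsilon(\alpha)$ over the order. Writing $u=\alpha-1>0$, $a=\nicefrac{K}{2\sigma^2}$, and $b=\log(1/\delta)$, the objective becomes $a(u+1)+\nicefrac{b}{u}$, whose minimizer is $u^\star=\sqrt{b/a}=\sigma\sqrt{2\log(1/\delta)/K}$ (which indeed satisfies $\alpha^\star=1+u^\star>1$). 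Substituting back yields the minimum value $a+2\sqrt{ab}$, i.e.
\begin{equation*}
	\epsilon = \frac{K}{2\sigma^2} + \frac{\sqrt{2K\log(1/\delta)}}{\sigma},
\end{equation*}
which is exactly the claimed bound.

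The bulk of this argument is mechanical: the optimization over $\alpha$ is elementary calculus, and the RDP-to-DP step is a direct application of \autoref{prop:rdptodp}. The one place that deserves care — and the step I would treat as the main obstacle — is the post-processing reduction, namely verifying rigorously that under edge-level adjacency every operation other than PMA is independent of the differing edge (in particular that the pre-trained encoder, and therefore $\check{\mathbf{X}}^{(0)}$, coincide on $\mathcal{G}$ and $\mathcal{G}'$), so that no additional edge-level privacy budget is spent outside the PMA call and the post-processing closure of RDP legitimately applies.
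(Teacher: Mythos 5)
Your proposal is correct and follows essentially the same route as the paper's own proof: identify the PMA call as the only edge-dependent step, invoke \autoref{thm:privacy} together with closure under post-processing to get $(\alpha,\nicefrac{K\alpha}{2\sigma^2})$-RDP for the whole algorithm, convert via \autoref{prop:rdptodp}, and minimize over $\alpha$. You merely spell out the minimization calculus and the edge-invariance of the encoder in more detail than the paper does.
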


\autoref{prop:gap-edge} shows that the privacy cost grows with the number of hops ($K$), but is independent of the number of training steps thanks to our GAP architecture.

\begin{revised}

	\paragraph{Inference privacy.}
	A major advantage of \name{} is that querying the model at inference time preserves DP \emph{without consuming additional privacy budget}. This is true for both the transductive and the inductive settings:
	\begin{itemize}[leftmargin=1em]
		\item \emph{Transductive setting:} In this setting, the inference is performed by feeding the privately trained CM with the cached aggregations of the test nodes, which have already been computed privately at training time. As this computation does not query the private graph structure and only post-processes the previous DP operations, due to the robustness of DP to post-processing, GAP provides inference privacy with no additional cost.
		
		\item \emph{Inductive setting:} In this case, first the new graph's node features are given to the encoder to obtain low-dimensional features, which are fed to AM to compute private aggregations. Then, the private aggregations are given to CM to obtain the final predictions. The only part where the private graph structure is queried is the AM, in which the PMA mechanism is applied to the new graph data, and thus the output is private. Furthermore, since the training and test graphs are disjoint, this application of the PMA mechanism is subject to the parallel composition of differentially private mechanisms, and thus it does not increase the privacy costs beyond that of training's. The other parts, the encoder and CM, perform graph-agnostic computations and only post-process previous DP outputs, leading to GAP ensuring inference privacy without extra privacy costs.
	\end{itemize}

\end{revised}

\subsection{Node-Level Privacy}\label{sec:node-priv}

Equipped with aggregation perturbation, the proposed GAP architecture guarantees edge-level privacy by default. However, it is readily extensible to provide node-level privacy guarantees as well, providing 
that we have bounded-degree graphs, i.e., the degree of each node should be bounded above by a constant $D$. This allows to bound the sensitivity of the aggregation function in the PMA mechanism when adding/removing a node, as in this case each node can influence at most $D$ other nodes. If the input graph has nodes with very high degrees, we can use neighbor sampling (as proposed in~\cite{daigavane2021node}) to randomly sample at most $D$ neighbors per node.

For bounded-degree graphs, adding or removing a node corresponds (in the worst case) to adding or removing $D$ edges. Therefore, our PMA mechanism also ensures node-level privacy, albeit with increased privacy costs compared to the edge-level setting (see \autoref{thm:nodeprivacy} below).

However, since the node features and labels are also private under node-level DP, both EM and CM need to be trained privately as they access node features/labels. To this end, we can simply use standard DP-SGD~\cite{abadi2016deep} or any other differentially private learning algorithm for pre-training the encoder as well as training CM with DP. In other words, steps 1 and 5 of \autoref{alg:gap} must be done with DP instead of regular non-private training. This way, since each of the three \name{} modules become node-level private, the entire \name{} model, as an adaptive composition of several node-level private mechanisms, satisfies node-level DP. 
The formal node-level privacy analysis of GAP's training and inference is provided below.

\paragraph{Training privacy.} The node-level privacy guarantees of the PMA mechanism and the GAP training algorithm are as follows. Detailed proofs are deferred to \autoref{sec:proof}.

\begin{theorem}\label{thm:nodeprivacy}
	Given the maximum degree $D\ge1$, maximum hop $K\ge1$, and noise variance $\sigma^2$, \autoref{alg:pma} (PMA mechanism) satisfies node-level $(\alpha, \nicefrac{DK\alpha}{2\sigma^2})\text{-RDP}$ for any $\alpha > 1$.
\end{theorem}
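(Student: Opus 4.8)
The plan is to mirror the edge-level argument behind \autoref{thm:privacy}, since node-level PMA differs from the edge-level case only in the per-hop sensitivity of the aggregation. I would view \autoref{alg:pma} as an \emph{adaptive} composition of $K$ Gaussian mechanisms: at hop $k$ the mechanism releases $\widetilde{\mathbf{X}}^{(k)} = \mathbf{A}^T \check{\mathbf{X}}^{(k-1)} + \mathcal{N}(\sigma^2\mathbb{I})$, and the subsequent row-normalization producing $\check{\mathbf{X}}^{(k)}$ is deterministic post-processing of $\widetilde{\mathbf{X}}^{(k)}$. By the post-processing robustness of RDP and the adaptive composition theorem for RDP~\cite{mironov2017renyi}, it suffices to upper bound the RDP of each hop conditioned on the previously released noisy aggregations, and then sum these bounds over $k=1,\dots,K$.

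The core step is the per-hop sensitivity computation under node-level adjacency. I would fix two node-level adjacent graphs $\mathcal{G}$ and $\mathcal{G}'$, where $\mathcal{G}'$ is obtained from $\mathcal{G}$ by deleting a node $u$ together with its incident edges, and then condition on a common prefix $\widetilde{\mathbf{X}}^{(1)},\dots,\widetilde{\mathbf{X}}^{(k-1)}$. Because normalization is deterministic, this forces $\check{\mathbf{X}}^{(k-1)}$ to coincide on the shared node set $\mathcal{V}\setminus\{u\}$. For any shared node $v$, the non-private aggregation $\mathbf{X}^{(k)}_v = \sum_{w\in\N_v}\check{\mathbf{X}}^{(k-1)}_w$ changes between the two graphs only if $u$ was an in-neighbor of $v$ (i.e.\ the edge $(u,v)$ existed), in which case it changes by exactly $\check{\mathbf{X}}^{(k-1)}_u$. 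The number of such $v$ is the out-degree of $u$, at most $D$, and each affected row changes by a vector of norm $\norm{\check{\mathbf{X}}^{(k-1)}_u}=1$ thanks to the normalization step. Hence the difference matrix has at most $D$ nonzero rows of unit norm, giving an $L_2$ (Frobenius) sensitivity $\Delta_k=\sqrt{D}$. Substituting $\Delta_k^2 = D$ into the Gaussian-mechanism RDP bound yields $(\alpha,\nicefrac{D\alpha}{2\sigma^2})$-RDP for hop $k$. Summing over the $K$ hops via adaptive composition then gives $(\alpha,\nicefrac{DK\alpha}{2\sigma^2})$-RDP, with the final normalization costing nothing as post-processing. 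Equivalently, one can read this off the reduction noted in \autoref{sec:node-priv}: deleting a node in a $D$-bounded-degree graph deletes at most $D$ edges, and since these all emanate from $u$ they perturb $D$ distinct rows by the \emph{same} unit-norm vector, recovering $\sqrt{D}$ rather than the weaker bound group privacy would give.

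I expect the main obstacle to be the bookkeeping around the recursive dependence and the mismatched output dimensions, rather than any single hard estimate. The feature matrix $\check{\mathbf{X}}^{(k-1)}$ depends on the graph through all earlier hops, so a naive non-adaptive composition is not directly justified; the observation that makes the argument go through is that conditioning on the released prefix fixes $\check{\mathbf{X}}^{(k-1)}_u$ to a constant, so edges \emph{into} $u$ (which only shape $u$'s own embedding) never enter the sensitivity, and only $u$'s out-edges matter. I would also need to handle the fact that $\mathcal{G}$ produces an extra output row for node $u$ that $\mathcal{G}'$ lacks: this row carries independent Gaussian noise and can be marginalized out as post-processing, so the Rényi divergence is governed entirely by the shared rows. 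Finally, I would make explicit that the degree bound $D$ is applied to the out-degree (the quantity controlling how many aggregations a node influences), consistent with the neighbor-sampling assumption described in \autoref{sec:node-priv}.
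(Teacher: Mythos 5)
Your proof is correct and reaches the paper's bound, but your key per-hop step is organized differently from the paper's. The paper factors the argument through two lemmas: \autoref{lem:n-sensitivity-small} shows each \emph{single node's} aggregation query has node-level sensitivity $1$ (so each per-node Gaussian release is $(\alpha,\nicefrac{\alpha}{2\sigma^2})$-RDP), and \autoref{prop:agg-rdp} then argues that since a node's embedding enters at most $D$ of the $N$ independent per-node queries, the whole noisy aggregation layer costs at most $D$ compositions, i.e.\ $(\alpha,\nicefrac{D\alpha}{2\sigma^2})$-RDP. You instead bound the Frobenius sensitivity of the entire matrix-valued aggregation directly --- at most $D$ affected rows, each shifted by a unit-norm vector, giving $\Delta=\sqrt{D}$ --- and invoke the Gaussian mechanism once per hop. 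For Gaussian noise these two routes are exactly equivalent ($D$ unit-sensitivity queries compose to the same Rényi divergence as one query of $L_2$ sensitivity $\sqrt{D}$), so nothing is lost or gained in tightness; your version has the virtue of paralleling the edge-level \autoref{lem:e-sensitivity} verbatim, while the paper's version makes explicit the ``how many queries does one node touch'' bookkeeping that generalizes beyond Gaussian noise. Both proofs then finish identically by adaptive composition over the $K$ hops with normalization as post-processing. Two of your side remarks are, if anything, more careful than the paper: you flag the output-dimension mismatch between $\mathcal{G}$ and $\mathcal{G}'$ (the extra row for the removed node), which the paper silently ignores, and you note that the relevant bounded quantity is the \emph{out}-degree (the number of aggregations a node influences), whereas the paper's in-neighbor sampling most directly bounds in-degrees; neither point invalidates the theorem as stated, but both are worth making explicit.
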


\begin{proposition}\label{prop:gap-node}
	For any $\alpha > 1$, let encoder pre-training (Step 1 of \autoref{alg:gap}) and CM training (Step 5 of \autoref{alg:gap}) satisfy $(\alpha,\epsilon_1(\alpha))$-RDP and $(\alpha,\epsilon_5(\alpha))$-RDP, respectively. Then, for any $0 < \delta < 1$, maximum hop $K\ge1$, maximum degree $D\ge1$, and noise variance $\sigma^2$, \autoref{alg:gap} satisfies node-level $(\epsilon, \delta)$-DP with $\epsilon = \epsilon_1(\alpha) + \epsilon_5(\alpha) + \nicefrac{DK\alpha}{2\sigma^2} + \nicefrac{\log(1/\delta)}{\alpha-1}$.
\end{proposition}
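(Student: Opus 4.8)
The plan is to prove \autoref{prop:gap-node} by composing the three node-level private building blocks of \autoref{alg:gap} and converting the result to $(\epsilon,\delta)$-DP. The three relevant stages are: encoder pre-training (Step~1), which is assumed to be $(\alpha,\epsilon_1(\alpha))$-RDP; the PMA mechanism (Step~4), which by \autoref{thm:nodeprivacy} is node-level $(\alpha, \nicefrac{DK\alpha}{2\sigma^2})$-RDP; and CM training (Step~5), assumed to be $(\alpha,\epsilon_5(\alpha))$-RDP. Steps~2 and~3 (applying the pre-trained encoder and row-normalizing its output) are pure post-processing of Step~1's output and query no additional private information, so by robustness to post-processing they consume no privacy budget and can be ignored in the accounting.

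First I would invoke the adaptive sequential composition property of RDP: if mechanisms are each $(\alpha,\epsilon_i(\alpha))$-RDP at the same order $\alpha$, their adaptive composition is $(\alpha, \sum_i \epsilon_i(\alpha))$-RDP. A subtle point worth flagging is that these three mechanisms operate on overlapping views of the same node-level-sensitive graph dataset (the encoder and CM touch features/labels, PMA touches the edges), and the later stages are chosen adaptively based on earlier outputs — so this is genuinely adaptive composition over a common adjacency notion, not parallel composition. Applying the composition result at a fixed $\alpha$ gives that \autoref{alg:gap} as a whole is node-level $\big(\alpha,\ \epsilon_1(\alpha) + \epsilon_5(\alpha) + \tfrac{DK\alpha}{2\sigma^2}\big)$-RDP.

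Next I would convert this RDP guarantee into an $(\epsilon,\delta)$-DP guarantee using \autoref{prop:rdptodp}, which states that an $(\alpha,\epsilon)$-RDP mechanism is $\big(\epsilon + \nicefrac{\log(1/\delta)}{\alpha-1},\ \delta\big)$-DP. Substituting the composed RDP cost yields exactly
\begin{equation*}
\epsilon = \epsilon_1(\alpha) + \epsilon_5(\alpha) + \frac{DK\alpha}{2\sigma^2} + \frac{\log(1/\delta)}{\alpha-1},
\end{equation*}
which is the claimed bound, holding for any $\alpha>1$ and any $\delta\in(0,1)$.

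The main obstacle here is conceptual rather than computational: one must be careful to justify that sequential composition — rather than parallel composition — is the correct tool, by arguing that all three stages act on the same node-level-adjacent dataset (so a single node's removal can simultaneously affect the features/labels seen by EM and CM and the edges seen by PMA), and that the composition is adaptive because CM's training depends on PMA's perturbed aggregations which in turn depend on the encoder's output. A secondary point is that the final bound is stated for an arbitrary fixed $\alpha$; I would note that in practice one optimizes over $\alpha>1$ to minimize $\epsilon$, but the proposition's correctness only requires the bound to hold for each $\alpha$, so no optimization step is needed in the proof itself. The remaining algebra is a direct substitution and requires no further work.
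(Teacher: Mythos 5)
Your overall accounting is exactly the paper's: adaptively compose, at a common order $\alpha$, the RDP guarantees of Step~1, Step~4 (via \autoref{thm:nodeprivacy}) and Step~5, then convert to $(\epsilon,\delta)$-DP with \autoref{prop:rdptodp}. The final bound and the remark that sequential (not parallel) composition is needed are both correct. However, your justification for why Steps~2 and~3 are free contains a genuine error. You assert they are ``pure post-processing of Step~1's output and query no additional private information.'' That is false: Step~2 evaluates $\text{MLP}_\text{enc}(\mathbf{X};\mathbf{\Theta}_{\text{enc}}^\star)$ on the \emph{raw} private feature matrix $\mathbf{X}$, not on any differentially private release, so it is a fresh query to node-level-sensitive data. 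If Steps~2--3 really were post-processing of a DP output, one could publicly release $\check{\mathbf{X}}^{(0)}$ without further cost, which would plainly violate node-level DP (it is a deterministic, invertible-in-spirit function of each node's private features).

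The correct argument --- and the one the paper gives --- is that Steps~2--3 expose the private features only as \emph{intermediate} quantities that are never released; they are consumed exclusively by Steps~4 and~5, each of which is itself node-level DP with respect to its input. In particular, the sensitivity analysis behind \autoref{thm:nodeprivacy} (via \autoref{lem:n-sensitivity-small}) already accounts for the added node's normalized feature vector entering the aggregation, and CM training is assumed $(\alpha,\epsilon_5(\alpha))$-RDP with respect to the data it trains on, which includes $\check{\mathbf{X}}^{(0)}$. So the privacy of the features touched in Steps~2--3 is carried by the DP guarantees of the downstream mechanisms, not by robustness to post-processing of Step~1. With that repair, the rest of your proof --- the three-way adaptive composition and the substitution into \autoref{prop:rdptodp} --- coincides with the paper's.
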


Note that in \autoref{prop:gap-node}, we cannot optimize $\alpha$ in closed form as we do not know the precise form of $\epsilon_1(\alpha)$ and $\epsilon_5(\alpha)$. However, in our experiments, we numerically optimize the choice of $\alpha$ on a per-case basis.

\paragraph{Inference privacy.}
The arguments stated for edge-level inference privacy also hold for node-level privacy. Note that in the inductive setting, the test graph should also have bounded degree
for the node-level inference privacy guarantees to hold.

\section{Discussion}
\label{sec:discussion}

\paragraph{Choice of aggregation function.}
In this paper, we used \textsc{Sum} as the default choice of aggregation function. Although other choices of aggregation functions are also possible, we empirically found that \textsc{Sum} is the most efficient choice to privatize, as its sensitivity does not depend on the size of the aggregation set (i.e., number of neighbors), which is itself a quantity that should be computed privately. For example, the calculation of both \textsc{Mean} and GCN~\cite{kipf2017semi} aggregation functions depend on the node degrees, and thus requires additional privacy budget to be spent on perturbing node degrees. 
In any case, \textsc{Sum} is recognized as one of the most expressive aggregation functions in the GNN literature~\cite{xu2018how, corso2020pna}.

\paragraph{Normalization instead of clipping.}
The PMA mechanism uses normalization to bound the effect of each individual feature on the \textsc{Sum} aggregation function. While clipping is more common in the private learning literature (e.g., gradient clipping in DP-SGD \cite{abadi2016deep}), we empirically found that normalization is a better choice for aggregation perturbation: CM is then trained on normalized data, which tends to facilitate learning. Normalizing the node embeddings is actually commonly done in non-private GNNs as well to stabilize training~\cite{hamilton2017inductive, you2020design}.

\paragraph{Limitations.}
As the PMA mechanism adds random noise to the aggregation function, its utility naturally depends on the size of the node's aggregation set, i.e., the node's degree. 
Specifically, with a certain amount of noise, the more inbound neighbors a node has, the more accurate its noisy aggregated vector will be. 
This implies that graphs with higher average degree per node can tolerate larger noise in the aggregation function, and thus GAP can achieve a better privacy-accuracy trade-off on such graphs. 
Conversely, GAP's performance will suffer if the average degree of the graph is too low, requiring higher privacy budgets to achieve acceptable accuracy. 
Note however that this is an expected behavior: nodes with fewer inbound neighbors are more easily influenced by a change in their neighborhood compared to nodes with higher degrees, and thus the privacy of low-degree nodes is harder to preserve than high-degree ones. Furthermore, 
this limitation is not specific to GAP: it is shared by all DP algorithms, whose performance generally suffer from lack of sufficient data.

\begin{revised}
    \paragraph{Edge-level vs. node-level privacy.} While GAP can work in either edge-level or node-level privacy settings, it must be emphasized that the former setting is suitable only for the use cases where the node-level information (e.g, features or labels) is not sensitive or is publicly available (e.g., the vertically partitioned graph setting described in \cite{wu2021linkteller}). Whenever node-level information is private as well (e.g., user profiles in a social network), however, edge-level privacy fails to provide appropriate privacy protection, and thus node-level privacy setting has to be enforced.
\end{revised}
\section{Experiments}\label{sec:exp}

\begin{revised}
    In this section, we conduct extensive experiments to empirically evaluate GAP's privacy-accuracy performance and its resilience under privacy attacks. As GAP's privacy guarantees are the same under both transductive and inductive settings, we only focus on the former, which has also more pertinent use cases (e.g., social networks).
\end{revised}

\subsection{Datasets}
We evaluate the proposed method on three publicly available node classification datasets, which are medium to large scale in terms of the number of nodes and edges: 

\paragraph{Facebook~\cite{traud2012social}.} This dataset contains the anonymized Facebook social network between UIUC students collected in September 2005. Nodes represent Facebook users and edges indicate friendship. Each node (user) has the following attributes: 
student/faculty status, gender, major, minor, and housing status, and the task is to predict the class year of users.

\paragraph{Reddit~\cite{hamilton2017inductive}.} This dataset consist of a set of posts from the Reddit social network, where each node represents a post and an edge indicates if the same user commented on both posts. Node features are extracted based on the embedding of the post contents, and the task is to predict the community (subreddit) that a post belongs to.

\paragraph{Amazon~\cite{chiang2019cluster}.} The largest dataset used in this paper represents Amazon product co-purchasing network, where nodes represent products sold on Amazon and an edge indicates if two products are purchased together. Node features are bag-of-words vectors of the product description followed by PCA, and the task is to predict the category of the products.\\

We preprocess the datasets by limiting the classes to those having 1k, 10k, and 100k nodes on Facebook, Reddit, and Amazon, respectively.
We then randomly split the remaining nodes into training, validation, and test sets with 75/10/15\% ratios, respectively. \autoref{tab:dataset} summarizes the statistics of the datasets after preprocessing.

\begin{table}[t]
  \centering
  \footnotesize
  \caption{Overview of dataset statistics.}\label{tab:dataset}
  \sc
  \begin{tabu} to \columnwidth {X[l,1.5] X[c,1.5] X[c,1.5] X[c,1] X[c,1] X[c,1]}
    \toprule
    {Dataset} & {Nodes} & {Edges} & {Degree} & {Features} & {Classes} \\
    \midrule
    Facebook & {26,406}  & {2,117,924} & 62 & 501 & 6    \\
    Reddit & {116,713}  & {46,233,380} & 209 & 602  & 8 \\
    Amazon & {1,790,731}  & {80,966,832} & 22 & 100  & 10  \\
    \bottomrule
  \end{tabu}
\end{table}

\subsection{Competing Methods} 


\paragraph{Edge-level private methods.} The following methods are evaluated under edge-level privacy: 
\begin{itemize}[leftmargin=1em, noitemsep, topsep=1pt]
    \item \emph{GAP-EDP:} Our proposed edge-level DP algorithm.
    \item \emph{SAGE-EDP:} This is the method of Wu~\etal~\cite{wu2021linkteller} that uses the graph perturbation approach, with the popular GraphSAGE architecture~\cite{hamilton2017inductive} as its backbone GNN model. 
    \begin{revised}
        We perturb the graph's adjacency matrix using the Asymmetric Randomized Response (ARR)~\cite{imola2021comm}, which performs better than \textsc{EdgeRand}~\cite{wu2021linkteller} by limiting the output sparsity.
    \end{revised}
    \item \emph{MLP:} A simple MLP model that does not use the graph edges, and thus provides perfect edge-level privacy ($\epsilon=0$).
\end{itemize}

\paragraph{Node-level private methods.} We compare the following node-level private algorithms:
\begin{itemize}[leftmargin=1em, noitemsep, topsep=1pt]
    \item \emph{GAP-NDP:} Our proposed node-level DP approach.
    \item \emph{SAGE-NDP:} This is the method of Daigavane~\etal~\cite{daigavane2021node} that adapts the standard DP-SGD method for 1-layer GNNs, with the same GraphSAGE architecture as its backbone model. Since this method does not inherently ensure inference privacy, as suggested by its authors, we add noise to the aggregation function based on its node-level sensitivity at test time and account for the additional privacy cost.
    \item \emph{MLP-DP:} Similar to MLP, but trained with DP-SGD so as to provide node-level DP without using the graph edges.
\end{itemize}

We do not consider the approach of \cite{olatunji2021releasing} as it requires public graph data and is thus not directly comparable to the others.

\paragraph{Non-private methods.} To quantify the accuracy loss of private approaches, we use the following non-private methods ($\epsilon=\infty$): 
\begin{itemize}[leftmargin=1em, noitemsep, topsep=1pt]
\item \emph{GAP-$\infty$:} a non-private counterpart of the GAP method, where we do not perturb the aggregations. 
\item \emph{SAGE-$\infty$:} a non-private GraphSAGE model.
\end{itemize}

\subsection{Experimental Setup}\label{sec:setup}

\paragraph{Model implementation details.} 
For our GAP models (GAP-EDP, GAP-NDP, and GAP-$\infty$), we set the number of $\text{MLP}_\text{enc}$, $\text{MLP}_\text{base}$, and $\text{MLP}_\text{head}$ layers to be 2, 1, and 1, respectively. We use concatenation as the \textsc{Combine} function (\autoref{eq:comb}) and tune the number of hops $K$ in $\{1,2,\dots,5\}$. 
For the GraphSAGE models (SAGE-EDP, SAGE-NDP, and SAGE-$\infty$), we use the \textsc{Sum} aggregation function and tune the number of message-passing layers in $\{1,2,\dots,5\}$, except for SAGE-NDP that only supports one message-passing layer. We use a 2-layer and a 1-layer MLP as preprocessing and post-processing before and after the message-passing layers, respectively.
For the MLP baselines (MLP and MLP-DP), we set the number of layers to 3. In addition, for both the GAP-NDP and SAGE-NDP methods, \begin{revised}we use randomized neighbor sampling to bound the maximum degree $D$ and search for the best $D$ within  $\{100,200,300,400\}$.\end{revised}
For all methods, we set the number of hidden units to 16 (including the dimension of GAP's encoded representation) and use the SeLU activation function~\cite{klambauer2017self} at every layer. Batch-normalization is used for all methods except the node-level private ones (GAP-NDP, SAGE-NDP, and MLP-DP), for which batch-normalization is not supported. 

\paragraph{Training and evaluation details.} We train the non-private and edge-level private methods using the Adam optimizer over 100 epochs with full-sized batches. 
For the node-level private algorithms (GAP-NDP, SAGE-NDP, MLP-DP), we use DP-Adam~\cite{gylberth2017differentially} with maximum gradient norm set to 1, and train each model for 10 epochs with a batch size of 256, 2048, 4096 on Facebook, Reddit, and Amazon, respectively. 
For our GAP models (GAP-$\infty$, GAP-EDP, and GAP-NDP), we use the same parameter setting for training both the encoder and classification modules. 
We train all the methods with a learning rate of 0.01 and repeat each combination of possible hyperparameter values 10 times. We pick the best performing model based on validation accuracy, and report the average test accuracy with 95\% confidence interval calculated by bootstrapping with 1000 samples.

\paragraph{Privacy accounting and calibration.} Privacy budget accounting is done via the Analytical Moments Accountant~\cite{pmlr-v89-wang19b}. We numerically calibrate the noise scale (i.e., the noise standard deviation $\sigma$ divided by the sensitivity) of PMA (for GAP-EDP and GAP-NDP), ARR (for SAGE-EDP), DP-SGD (for GAP-NDP, SAGE-NDP, and MLP-DP) and the Gaussian mechanism (for inference privacy in SAGE-NDP) to achieve the desired $(\epsilon,\delta)$-DP. We report results for several values of $\epsilon$, while $\delta$ is set to be smaller than the inverse number of private entities (i.e., edges for edge-level privacy, nodes for node-level privacy).
For both GAP-NDP and SAGE-NDP, we use the same noise scale for perturbing the gradients (in DP-SGD) and the aggregations (in PMA and Gaussian mechanisms).

\paragraph{Software and hardware.} All the models are implemented in PyTorch~\cite{NEURIPS2019_9015} using PyTorch-Geometric (PyG)~\cite{Fey/Lenssen/2019}. We use the \texttt{autodp} library\footnote{\url{https://github.com/yuxiangw/autodp}} which implements analytical moments accountant, and utilize Opacus~\cite{opacus} for training the node-level private models with differential privacy. Experiments are conducted on Sun Grid Engine with NVIDIA GeForce RTX 3090 and NVIDIA Tesla V100 GPUs, Intel Xeon 6238 CPUs, and 32 GB RAM. 

\begin{table}[t]
    \centering
    \caption{Test accuracy of different methods on the three datasets. The best performing method in each category --- none-private, edge-level DP and node-level DP --- is highlighted.}
    \label{tab:results}
    \sc
    \footnotesize
    \begin{tabular}{llcccc}
        \toprule
         & Method & $\epsilon$ & Facebook &           Reddit &           Amazon \\
        \midrule
        \multirow{2}{*}{\rotatebox[origin=c]{90}{\scriptsize None}} 
         & GAP-$\infty$ & $\infty$ &  80.0 $\pm$ 0.48 &  \textbf{\color{BrickRed} 99.4 $\pm$ 0.02} &  91.2 $\pm$ 0.07 \\
         & SAGE-$\infty$ & $\infty$ &  \textbf{\color{BrickRed} 83.2 $\pm$ 0.68} &  99.1 $\pm$ 0.01 &  \textbf{\color{BrickRed} 92.7 $\pm$ 0.09} \\
        \midrule
        \multirow{3}{*}{\rotatebox[origin=c]{90}{\scriptsize Edge DP}} 
         & GAP-EDP & 4 &  \textbf{\color{ForestGreen} 76.3 $\pm$ 0.21} &  \textbf{\color{ForestGreen} 98.7 $\pm$ 0.03} &  \textbf{\color{ForestGreen} 83.8 $\pm$ 0.26} \\
         & SAGE-EDP & 4 &  50.4 $\pm$ 0.69 &  84.6 $\pm$ 1.63 &  68.3 $\pm$ 0.99 \\
         & MLP & 0 &  50.8 $\pm$ 0.17 &  82.4 $\pm$ 0.10 &  71.1 $\pm$ 0.18 \\
        \midrule
        \multirow{3}{*}{\rotatebox[origin=c]{90}{\scriptsize Node DP}} 
         & GAP-NDP & 8 &  \textbf{\color{RoyalBlue} 63.2 $\pm$ 0.35} &  \textbf{\color{RoyalBlue} 94.0 $\pm$ 0.14} &  \textbf{\color{RoyalBlue} 77.4 $\pm$ 0.07} \\
         & SAGE-NDP & 8 &  37.2 $\pm$ 0.96 &  60.5 $\pm$ 1.10 &  27.5 $\pm$ 0.83 \\
         & MLP-DP & 8 &  50.2 $\pm$ 0.25 &  81.5 $\pm$ 0.12 &  73.6 $\pm$ 0.05 \\
        \bottomrule
    \end{tabular}
    
\end{table}

\begin{figure*}[t]
    \centering
    \includegraphics[width=\columnwidth]{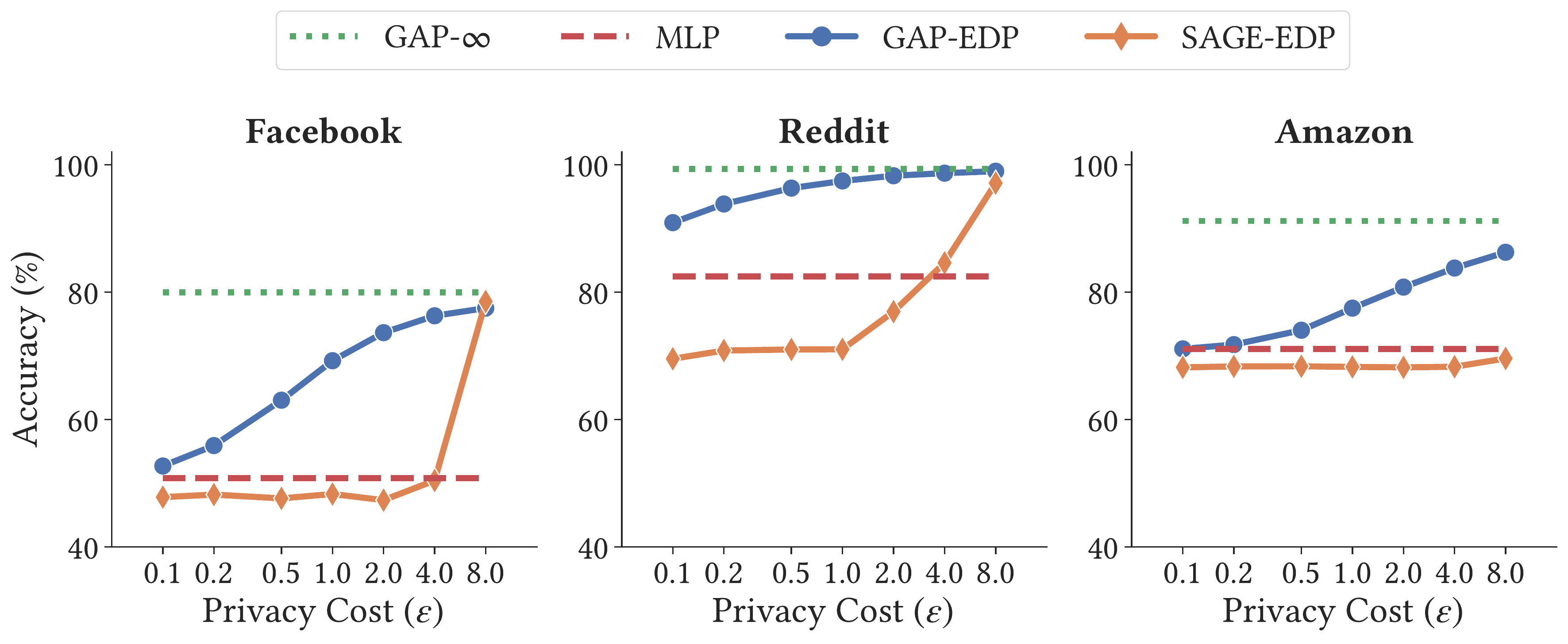}\hfill
    \includegraphics[width=\columnwidth]{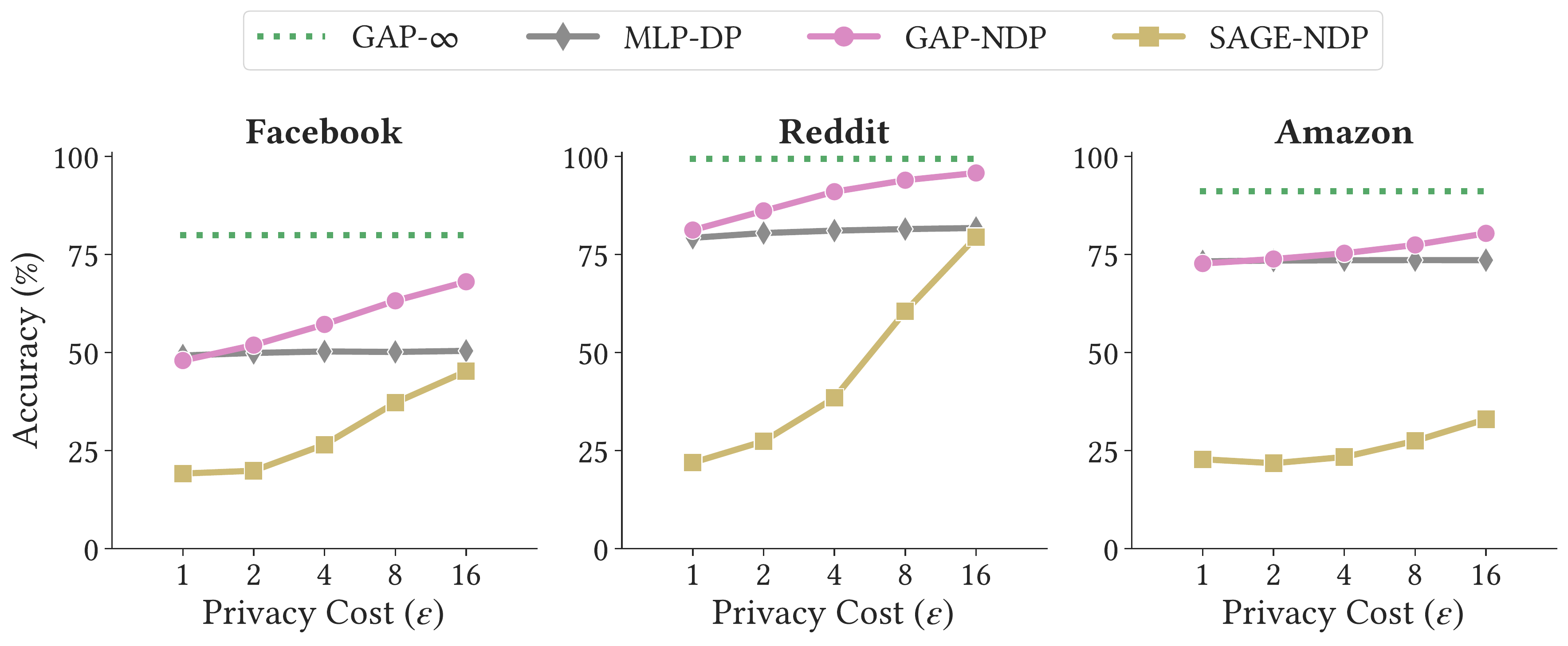}
    \caption{Accuracy vs. privacy cost ($\epsilon$) of edge-level private algorithms (left) and node-level private methods (right). 
    }
    \label{fig:acc-priv}
\end{figure*}

\subsection{Experimental Results}

\subsubsection{Trade-offs between Privacy and Accuracy}

We first compare the accuracy of our proposed methods against the non-private, edge-level private, and node-level private baselines.
\begin{revised}
    We fix the privacy budget to $\epsilon=8$ for the node-level private methods and $\epsilon=4$ for the edge-level private ones    
\end{revised}
(except for MLP, which does not use the graph structure and thus achieves $\epsilon=0$). The results are presented in \autoref{tab:results}.
We observe that in the non-private setting, the proposed GAP architecture is competitive with SAGE, with only a slight decrease in accuracy on Facebook and Amazon. Under both edge-level and node-level privacy settings, however, our proposed methods GAP-EDP and GAP-NDP significantly outperform their competitors.
\begin{revised}
    Particularly, under edge-level privacy, GAP-EDP's accuracy is roughly 26, 14, and 15 points higher than the best competitor over Facebook, Reddit, and Amazon, respectively. Under node-level privacy, our proposed GAP-NDP method outperforms the best performing competitor by approximately 13, 13, and 4 accuracy points, respectively.
\end{revised}

Next, to investigate how different methods perform under different privacy budgets, 
\begin{revised}
    we vary $\epsilon$ from 0.1 to 8 for edge-level private methods and from 1 to 16 for node-level private algorithms    
\end{revised}
and report the accuracy of the methods under each privacy budget. The result for both edge-level and node-level privacy settings is depicted in \autoref{fig:acc-priv}.

Under edge-level privacy (\autoref{fig:acc-priv}, left side), we observe that GAP-EDP consistently outperforms its direct competitor, SAGE-EDP, especially at lower privacy costs.
\begin{revised}
    The relative gap between GAP-EDP and SAGE-EDP is influenced by the average degree of the dataset. For example, on Facebook and Reddit with higher average degrees, SAGE-EDP requires a high privacy budget of $\epsilon\geq 8$ to achieve reasonable accuracy, but on Amazon, which has the lowest average degree, it cannot even beat the MLP baseline. In comparison, the accuracy of GAP-EDP approaches the non-private GAP-$\infty$ at much lower privacy budgets, and always performs better than a vanilla MLP. This is because SAGE-EDP perturbs the adjacency matrix, which is extremely high-dimensional and sparse, while GAP-EDP perturbs the aggregated node embeddings, which has much lower dimensions and is not sparse compared to the adjacency matrix.
\end{revised}
The amount of accuracy loss with respect to the non-private method also depends on the average degree of the graph. For example, on Reddit at $\epsilon=2$, GAP-$\infty$'s accuracy is only 1 point higher than GAP-EDP's, while on Amazon at $\epsilon=8$, GAP-EDP's accuracy fall behind GAP-$\infty$ by around 5 points. These observations are in line with our discussion of~\autoref{sec:discussion}.

\looseness=-1 We can observe similar trends under node-level privacy (\autoref{fig:acc-priv}, right side). 
\begin{revised}
    We see that our GAP-NDP method always performs on par or better than the MLP-DP baseline, and also significantly outperforms SAGE-NDP under all the considered privacy budgets. We attribute this to two factors: first, SAGE-NDP is limited to 1-layer models and thus cannot exploit higher-order aggregations; second, the naive noisy aggregation patch for supporting inference privacy severely hurts the performance of SAGE-NDP.
\end{revised}
As expected, since the node-level private GAP-NDP hides more information (e.g., node features, labels, and all the adjacent edges to a node) than the edge-level private GAP-EDP, it requires larger privacy budgets to achieve a reasonable accuracy. Still, the accuracy loss with respect to the non-private method is higher in the node-level private method as we have further information loss due to neighborhood sampling (to bound the graph's maximum degree) and gradient clipping (to bound the sensitivity in DP-SGD/Adam).

\begin{revised}
\subsubsection{Resilience Against Privacy Attacks}
As mentioned above, the node-level private methods require a higher privacy budget than the edge-level private ones as they attempt to hide much more information. 
In order to assess the practical implications of choosing rather large privacy budgets (e.g., $\epsilon=8$ in~\autoref{tab:results}), we empirically measure the privacy guarantees of GAP-NDP and other node-level private methods by conducting node-level membership inference attack~\cite{he2021node, olatunji2021membership} as the most relevant adapted privacy attack to GNNs.

\paragraph{Attack overview.}
The attack is modeled as a binary classification task, where the goal is to infer whether an arbitrary node $v$ is a member of the training set $\mathcal{V}_T$ of the target GNN. 
The key intuition is that due to overfitting, GNNs give more confident probability scores to training nodes than to test ones, which can be exploited by the attacker to distinguish members of the training set. Having access to a shadow graph dataset coming from the same distribution as the target graph, the attacker first trains a shadow GNN to mimic the behavior of the target GNN, but for which the membership ground truth is known. Then, the attacker trains an attack model over the probability scores of the shadow graph nodes and their corresponding membership labels. Finally, the attacker uses the trained attack model to infer the membership of the target graph nodes.

\paragraph{Attack settings.}
We follow the TSTF (train on subgraph, test on full graph) approach of~\cite{olatunji2021membership} for the node-level membership inference attack. Specifically, we consider a strong adversary with access to a shadow graph dataset with 1000 nodes per class, which are sampled uniformly at random from the target dataset. For the shadow model, we use the same architecture and hyperparameters as the target model (described in~\autoref{sec:setup}). Similar to prior work~\cite{olatunji2021membership}, we use a 3-layer MLP with 64 hidden units as the attack model, and use the area under the receiver operating characteristic curve (AUC) averaged over 10 runs as the evaluation metric.

\paragraph{Results.}
\autoref{tab:attack} reports the mean AUC of the attack on different node-level private methods trained with the same setting as in~\autoref{fig:acc-priv} (right). As we see, the attack is quite effective on the non-private methods ($\epsilon=\infty$), especially on Facebook and Amazon datasets. The success of the attack on each method mainly depends on its generalization gap (the difference between the training and test accuracy): the higher the generalization gap, the more confident the model is on the training nodes and the easier it is to distinguish them from the test nodes. Hence, the lower attack performance on the non-private SAGE method is due to its lower generalization gap compared to the other methods.
Nevertheless, for all private GNN methods, we observe that DP with privacy budgets as large as $\epsilon=16$ can effectively defend against the attack, reducing the AUC to about 50\% (random baseline) on all datasets. This result is in line with the work of~\cite{jayaraman2019evaluating, jagielski2020auditing, nasr2021adversary}, showing that DP with large privacy budgets can still effectively mitigate realistic membership inference attacks.

\begingroup
\setlength{\tabcolsep}{3pt} 
\begin{table}
    \centering
    \caption{Mean AUC of node membership inference attack.}
    \label{tab:attack}
    \footnotesize
    \sc
    \begin{tabular}{llcccccc}
        \toprule
        Dataset & Method & $\epsilon=1$ & $\epsilon=2$ & $\epsilon=4$ & $\epsilon=8$ & $\epsilon=16$ & $\epsilon=\infty$ \\
        \midrule
                    & GAP-NDP &        50.16 &         50.25 &        50.61 &       51.11 &         52.66 &         81.67 \\
        Facebook    & SAGE-NDP &       50.25 &         50.20 &        50.23 &                    50.17 &         50.20 &         62.49 \\
                     & MLP-DP &         50.32 &         50.72 &        52.13 &       53.44 &         54.77 &         81.57 \\
        \midrule
                    & GAP-NDP &        50.04 &        50.39 &        51.20 &          52.23 &         52.54 &             54.97 \\
        Reddit  & SAGE-NDP &        49.97 &        49.97 &        49.95 &                     50.00 &         49.98 &             50.05 \\
                    & MLP-DP &        51.25 &        53.09 &        55.13 &           56.72 &         58.32 &             71.35 \\
        \midrule
                    & GAP-NDP &        50.06 &        50.23 &        50.54 &          51.53 &         51.72 &             66.68 \\
        Amazon  & SAGE-NDP &        49.93 &        49.93 &        49.93 &                     49.92 &         49.97 &             59.41 \\
                    & MLP-DP &        50.30 &        50.58 &        51.43 &           52.31 &         53.34 &             72.97 \\
        \bottomrule
    \end{tabular}
\end{table}
\endgroup
\end{revised}

\subsubsection{Ablation Studies}

\paragraph{Effectiveness of the encoder module (EM).}
In this experiment, we investigate the effect of EM on the accuracy/privacy performance of the proposed methods, GAP-EDP and GAP-NDP. We compare the case in which EM is used as usual with the case where we remove EM and just input the original node features to the aggregation module. The results under different privacy budgets are given in \autoref{fig:encoder}. We can observe that in all cases, the accuracy of GAP-EDP and GAP-NDP is higher with EM than without it. For example, leveraging EM results in a gain of around 20, 2, and 5 accuracy points for GAP-EDP with $\epsilon=1$ on Facebook, Reddit, and Amazon datasets, respectively. GAP-NDP with EM also benefits from a gain of more than 10, 10, and 5 points with $\epsilon=4$ on Facebook, Reedit, and Amazon datasets, respectively.
\begin{revised}
    As discussed in~\autoref{sec:em}, the improved performance with EM is mainly due to the reduced dimensionality of the aggregation module's input, which leads to adding less noise to the aggregations.
\end{revised}
Also, the effect of EM is more significant on GAP-NDP, as the amount of noise injected into the aggregations is generally larger for node-level privacy, hence dimensionality reduction becomes more critical to mitigate the impact of noise.

\begin{figure}[t]
    \centering
    \includegraphics[width=\columnwidth]{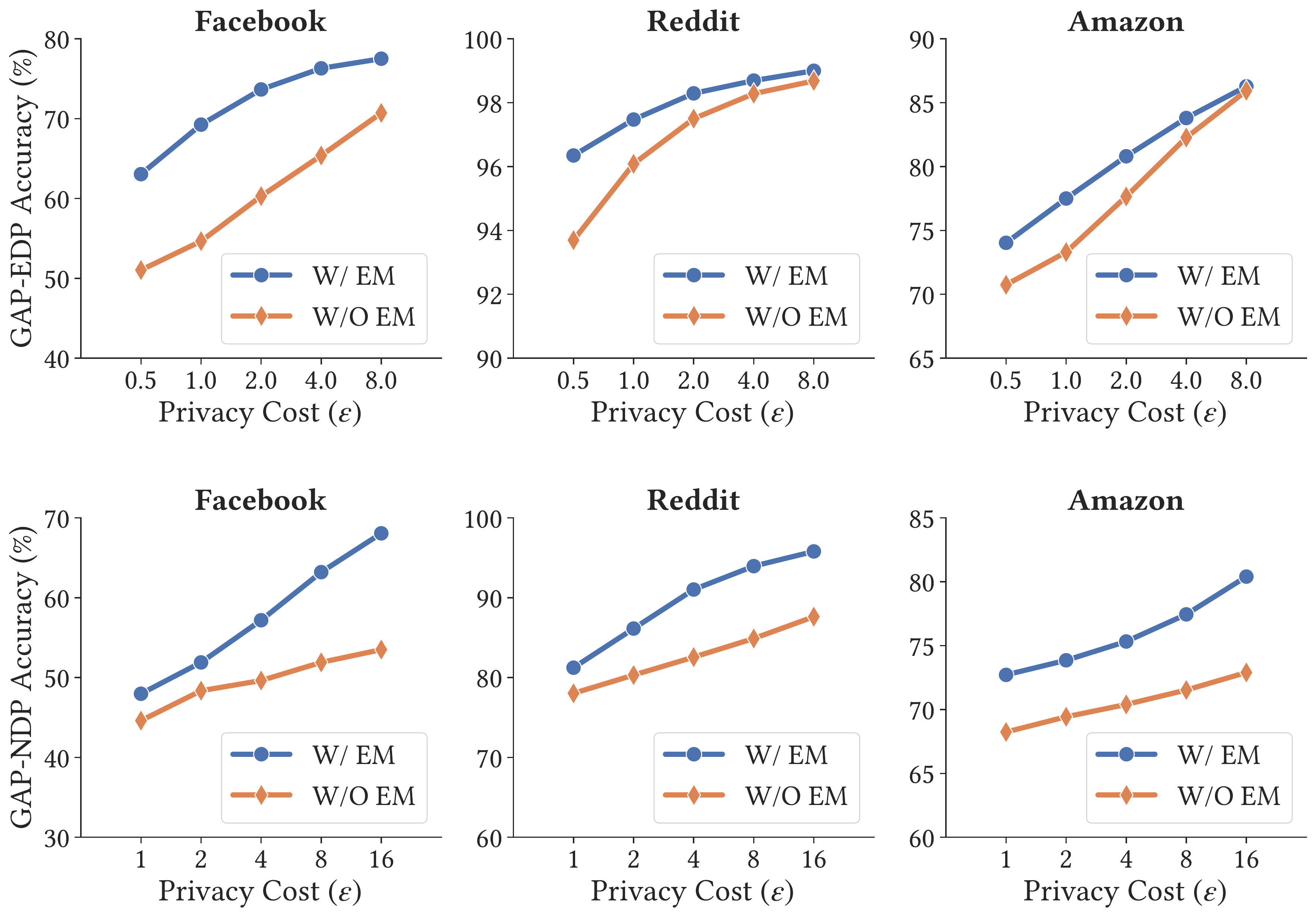}
    \caption{Effect of the encoder module (EM) on the accuracy/privacy performance of the edge-level private GAP-EDP (top) and the node-level private GAP-NDP (bottom).}
    \label{fig:encoder}
\end{figure}

\begin{figure}[t]
    \centering
    \includegraphics[width=\columnwidth]{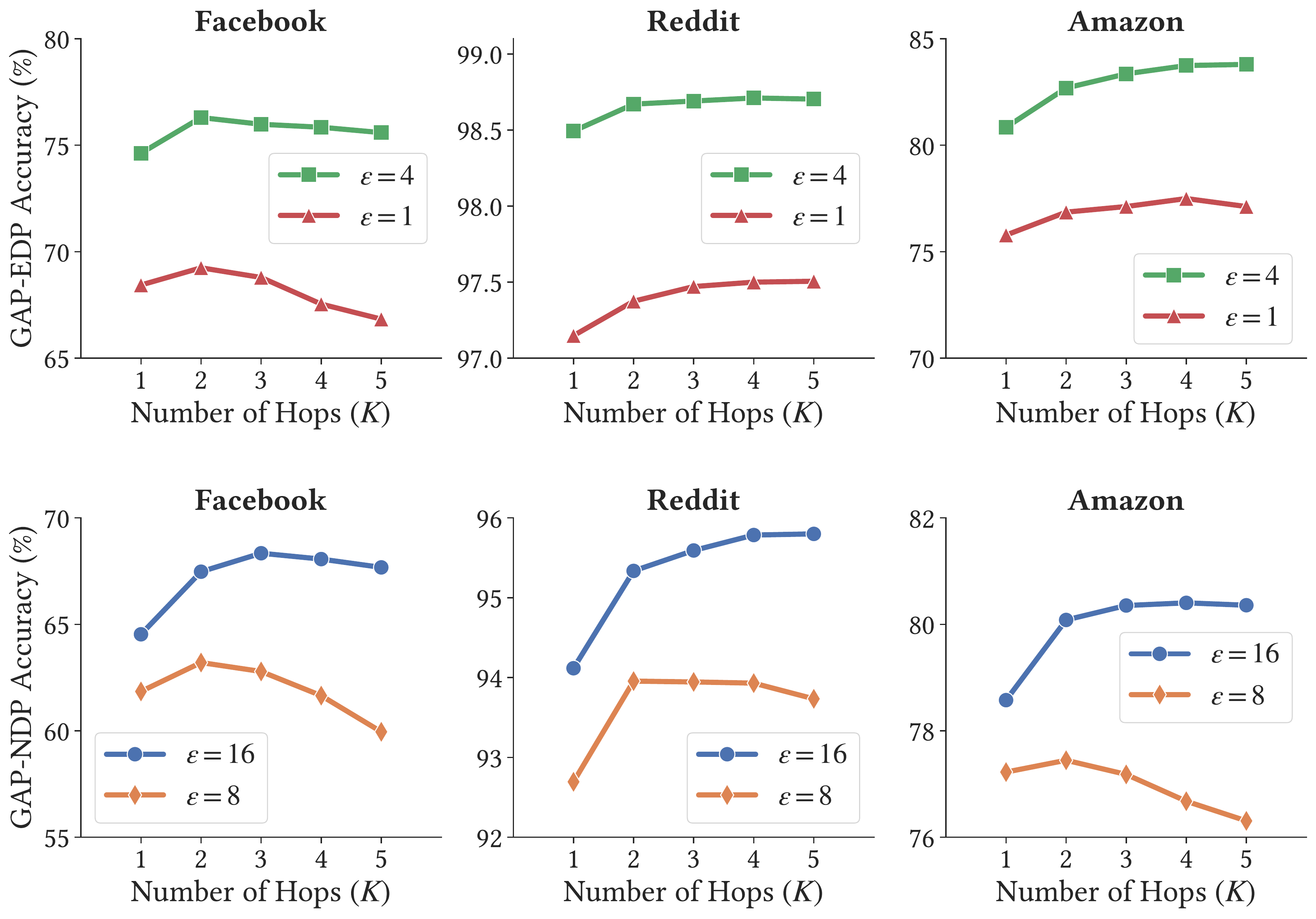}
    \caption{Effect of the number of hops $K$ on the accuracy/privacy performance of the edge-level private GAP-EDP (top) and the node-level private GAP-NDP (bottom).
    }
    \label{fig:hops}
\end{figure}

\paragraph{Effect of the number of hops.}
In this experiment, we investigate how changing the number of hops $K$ affects the accuracy/privacy performance of our proposed methods, GAP-EDP and GAP-NDP. We vary $K$ within $\{1,2,3,4,5\}$ and report the accuracy under different privacy budgets: $\epsilon\in\{1,4\}$ for GAP-EDP and $\epsilon\in\{8,16\}$ for GAP-NDP. The result is depicted in \autoref{fig:hops}. We observe that 
both of our methods can effectively benefit from allowing multiple hops, but there is a trade-off in increasing the number of hops. As we increment $K$, the accuracy of both GAP-EDP and GAP-NDP method increase up to a point and then steady or decrease in almost all cases. The reason is that with a larger $K$ the model is able to utilize information from more distant nodes (all the nodes within the $K$-hop neighborhood of a node) for prediction, which can increase the final accuracy. However, as more hops are involved, the amount of noise in the aggregations is also increased, which adversely affects the model's accuracy. We can see that with the lower privacy budgets where the noise is more severe, both GAP-EDP and GAP-NDP achieve their peak accuracy at smaller $K$ values. But as the privacy budget increases, the magnitude of the noise is reduced, enabling the models to benefit from larger $K$ values.

\begin{figure}[t]
    \centering
    \includegraphics[width=\columnwidth]{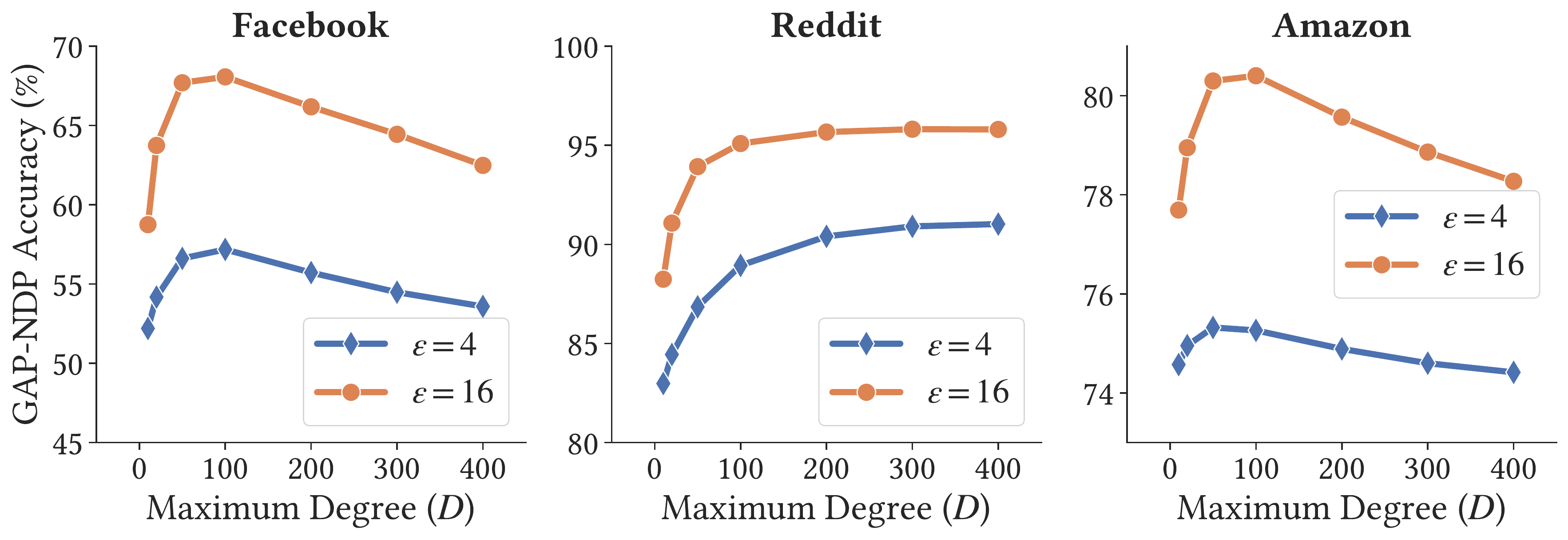}
    \caption{Effect of the degree bound $D$ on the accuracy/privacy performance of the node-level private GAP-NDP method. 
    }
    \label{fig:degree}
\end{figure}

\paragraph{Effect of the maximum degree.}
We now analyze the effect of $D$ on the performance of our node-level private method. 
\begin{revised}
    We vary $D$ from 10 to 400 and report GAP-NDP's accuracy under two different privacy budgets $\epsilon\in\{4,16\}$. \autoref{fig:degree} shows that the accuracy keeps growing with $D$ on Reddit (which has a high average degree), while on Facebook and Amazon (lower average degrees) the accuracy increases with $D$ up to a peak point, and drops afterwards. 
\end{revised}
This is due to the trade-off between having more samples for aggregation and the amount of noise injected: the larger $D$, the fewer neighbors are excluded from the aggregations (i.e., less information loss), but on the other hand, the larger the sensitivity of the aggregation function, leading to more noise injection. We also observe that the accuracy gain as a result of increasing $D$ gets bigger as the privacy budget is increased from 5 to 20, since a higher privacy budget compensates for the higher sensitivity by reducing the amount of noise.

\section{Conclusion}\label{sec:conclusion}

In this paper, we presented \name{}, a privacy-preserving GNN architecture that ensures both edge-level and node-level differential privacy for training and inference over sensitive graph data.
We used aggregation perturbation, where the Gaussian mechanism is applied to the output of the GNN's aggregation function, as a fundamental technique to achieve DP in our approach. 
\begin{revised}
    We proposed a new GNN architecture tailored to the specifics of private learning over graphs, aiming to achieve better privacy-accuracy trade-offs while tackling the intricate challenges involved in the design of differentially private GNNs.
\end{revised}
Experimental results over real-world graph datasets showed that our approach achieves favorable privacy/accuracy trade-offs and significantly outperforms existing methods.
Promising future directions include: (i) investigating robust aggregation functions that provide specific benefits for private learning; (ii) exploiting the redundancy of information in recursive aggregations to achieve tighter composition when the number of hops $K$ gets large, which might prove useful for specific applications; (iii) extending the framework to other tasks and scenarios, such as link-wise prediction or learning over dynamic graphs; and (iv) conducting an extended theoretical analysis of differentially private GNNs, such as proving utility bounds and characterizing their expressiveness.

\section*{Acknowledgments}
This work was supported by the European Commission's Horizon 2020 Program ICT-48-2020, under grant number 951911, AI4Media project.
It was also supported by the French National Research Agency (ANR) through grant ANR-20-CE23-0015 (Project PRIDE). Ali Shahin Shamsabadi acknowledges support from The Alan Turing Institute.

\section*{Availability}
Our open-source implementation is publicly available on GitHub at \href{https://github.com/sisaman/GAP}{https://github.com/sisaman/GAP}.

\bibliographystyle{plain}
\bibliography{ref}

\begin{thebibliography}{10}

\bibitem{abadal2021computing}
Sergi Abadal, Akshay Jain, Robert Guirado, Jorge L{\'o}pez-Alonso, and Eduard
  Alarc{\'o}n.
\newblock Computing graph neural networks: A survey from algorithms to
  accelerators.
\newblock {\em ACM Computing Surveys (CSUR)}, 54(9):1--38, 2021.

\bibitem{abadi2016deep}
Martin Abadi, Andy Chu, Ian Goodfellow, H~Brendan McMahan, Ilya Mironov, Kunal
  Talwar, and Li~Zhang.
\newblock Deep learning with differential privacy.
\newblock In {\em Proceedings of the 2016 ACM SIGSAC conference on computer and
  communications security}, pages 308--318, 2016.

\bibitem{wypy2021miagnn}
Wu~Bang, Yang Xiangwen, Pan Shirui, and Yuan Xingliang.
\newblock Adapting membership inference attacks to gnn for graph
  classification: Approaches and implications.
\newblock In {\em 2021 IEEE International Conference on Data Mining (ICDM)}.
  IEEE, 2021.

\bibitem{9378164}
Mark Cheung and José M.~F. Moura.
\newblock Graph neural networks for covid-19 drug discovery.
\newblock In {\em 2020 IEEE International Conference on Big Data (Big Data)},
  pages 5646--5648, 2020.

\bibitem{chiang2019cluster}
Wei-Lin Chiang, Xuanqing Liu, Si~Si, Yang Li, Samy Bengio, and Cho-Jui Hsieh.
\newblock Cluster-gcn: An efficient algorithm for training deep and large graph
  convolutional networks.
\newblock In {\em Proceedings of the 25th ACM SIGKDD International Conference
  on Knowledge Discovery \& Data Mining}, pages 257--266, 2019.

\bibitem{corso2020pna}
Gabriele Corso, Luca Cavalleri, Dominique Beaini, Pietro Li\`{o}, and Petar
  Veli\v{c}kovi\'{c}.
\newblock Principal neighbourhood aggregation for graph nets.
\newblock In {\em Advances in Neural Information Processing Systems}, 2020.

\bibitem{daigavane2021node}
Ameya Daigavane, Gagan Madan, Aditya Sinha, Abhradeep~Guha Thakurta, Gaurav
  Aggarwal, and Prateek Jain.
\newblock Node-level differentially private graph neural networks.
\newblock {\em arXiv preprint arXiv:2111.15521}, 2021.

\bibitem{diao2019dynamic}
Zulong Diao, Xin Wang, Dafang Zhang, Yingru Liu, Kun Xie, and Shaoyao He.
\newblock Dynamic spatial-temporal graph convolutional neural networks for
  traffic forecasting.
\newblock In {\em Proceedings of the AAAI conference on artificial
  intelligence}, volume~33, pages 890--897, 2019.

\bibitem{duddu2020quantifying}
Vasisht Duddu, Antoine Boutet, and Virat Shejwalkar.
\newblock Quantifying privacy leakage in graph embedding.
\newblock In {\em MobiQuitous 2020 - 17th EAI International Conference on
  Mobile and Ubiquitous Systems: Computing, Networking and Services},
  MobiQuitous '20, page 76–85, New York, NY, USA, 2020. Association for
  Computing Machinery.

\bibitem{dwork2008differential}
Cynthia Dwork.
\newblock Differential privacy: A survey of results.
\newblock In {\em International conference on theory and applications of models
  of computation}, pages 1--19. Springer, 2008.

\bibitem{dwork2006calibrating}
Cynthia Dwork, Frank McSherry, Kobbi Nissim, and Adam Smith.
\newblock Calibrating noise to sensitivity in private data analysis.
\newblock In {\em Theory of cryptography conference}, pages 265--284. Springer,
  2006.

\bibitem{Fey/Lenssen/2019}
Matthias Fey and Jan~E. Lenssen.
\newblock Fast graph representation learning with {PyTorch Geometric}.
\newblock In {\em ICLR Workshop on Representation Learning on Graphs and
  Manifolds}, 2019.

\bibitem{frasca2020sign}
Fabrizio Frasca, Emanuele Rossi, Davide Eynard, Benjamin Chamberlain, Michael
  Bronstein, and Federico Monti.
\newblock Sign: Scalable inception graph neural networks.
\newblock In {\em ICML 2020 Workshop on Graph Representation Learning and
  Beyond}, 2020.

\bibitem{gkalelis2021objectgraphs}
Nikolaos Gkalelis, Andreas Goulas, Damianos Galanopoulos, and Vasileios
  Mezaris.
\newblock Objectgraphs: Using objects and a graph convolutional network for the
  bottom-up recognition and explanation of events in video.
\newblock In {\em Proceedings of the IEEE/CVF Conference on Computer Vision and
  Pattern Recognition}, pages 3375--3383, 2021.

\bibitem{gylberth2017differentially}
Roan Gylberth, Risman Adnan, Setiadi Yazid, and T~Basaruddin.
\newblock Differentially private optimization algorithms for deep neural
  networks.
\newblock In {\em 2017 International Conference on Advanced Computer Science
  and Information Systems (ICACSIS)}, pages 387--394. IEEE, 2017.

\bibitem{hamilton2017inductive}
William~L Hamilton, Rex Ying, and Jure Leskovec.
\newblock Inductive representation learning on large graphs.
\newblock In {\em Proceedings of the 31st International Conference on Neural
  Information Processing Systems}, pages 1025--1035, 2017.

\bibitem{hamilton2017representation}
William~L Hamilton, Rex Ying, and Jure Leskovec.
\newblock Representation learning on graphs: Methods and applications.
\newblock {\em IEEE Data Engineering Bulletin}, 2017.

\bibitem{hay2009accurate}
Michael Hay, Chao Li, Gerome Miklau, and David Jensen.
\newblock Accurate estimation of the degree distribution of private networks.
\newblock In {\em 2009 Ninth IEEE International Conference on Data Mining},
  pages 169--178. IEEE, 2009.

\bibitem{he2021stealing}
Xinlei He, Jinyuan Jia, Michael Backes, Neil~Zhenqiang Gong, and Yang Zhang.
\newblock Stealing links from graph neural networks.
\newblock In {\em 30th $\{$USENIX$\}$ Security Symposium ($\{$USENIX$\}$
  Security 21)}, 2021.

\bibitem{he2021node}
Xinlei He, Rui Wen, Yixin Wu, Michael Backes, Yun Shen, and Yang Zhang.
\newblock Node-level membership inference attacks against graph neural
  networks.
\newblock {\em arXiv preprint arXiv:2102.05429}, 2021.

\bibitem{imola2021comm}
Jacob Imola, Takao Murakami, and Kamalika Chaudhuri.
\newblock {Communication-Efficient} triangle counting under local differential
  privacy.
\newblock In {\em 31st USENIX Security Symposium (USENIX Security 22)}, Boston,
  MA, August 2022. USENIX Association.

\bibitem{jagielski2020auditing}
Matthew Jagielski, Jonathan~R. Ullman, and Alina Oprea.
\newblock Auditing differentially private machine learning: How private is
  private sgd?
\newblock In {\em Proceedings of the Advances in Neural Information Processing
  (NeurIPS)}, Virtual Event, December 2020.

\bibitem{jayaraman2019evaluating}
Bargav Jayaraman and David Evans.
\newblock Evaluating differentially private machine learning in practice.
\newblock In {\em 28th USENIX Security Symposium (USENIX Security 19)}, pages
  1895--1912, Santa Clara, CA, August 2019. USENIX Association.

\bibitem{jiang2021graph}
Weiwei Jiang and Jiayun Luo.
\newblock Graph neural network for traffic forecasting: A survey.
\newblock {\em Expert Systems with Applications}, page 117921, 2022.

\bibitem{kearnes2016molecular}
Steven Kearnes, Kevin McCloskey, Marc Berndl, Vijay Pande, and Patrick Riley.
\newblock Molecular graph convolutions: moving beyond fingerprints.
\newblock {\em Journal of computer-aided molecular design}, 30(8):595--608,
  2016.

\bibitem{kipf2017semi}
Thomas~N. Kipf and Max Welling.
\newblock Semi-supervised classification with graph convolutional networks.
\newblock In {\em International Conference on Learning Representations (ICLR)},
  2017.

\bibitem{klambauer2017self}
G{\"u}nter Klambauer, Thomas Unterthiner, Andreas Mayr, and Sepp Hochreiter.
\newblock Self-normalizing neural networks.
\newblock In {\em Proceedings of the 31st international conference on neural
  information processing systems}, pages 972--981, 2017.

\bibitem{li2015gated}
Yujia Li, Richard Zemel, Marc Brockschmidt, and Daniel Tarlow.
\newblock Gated graph sequence neural networks.
\newblock In {\em Proceedings of ICLR'16}, 2016.

\bibitem{mironov2017renyi}
Ilya Mironov.
\newblock R{\'e}nyi differential privacy.
\newblock In {\em 2017 IEEE 30th Computer Security Foundations Symposium
  (CSF)}, pages 263--275. IEEE, 2017.

\bibitem{mohammadshahi-henderson-2020-graph}
Alireza Mohammadshahi and James Henderson.
\newblock Graph-to-graph transformer for transition-based dependency parsing.
\newblock In {\em Proceedings of the 2020 Conference on Empirical Methods in
  Natural Language Processing: Findings}, pages 3278--3289, Online, November
  2020. Association for Computational Linguistics.

\bibitem{nasr2021adversary}
Milad Nasr, Shuang Song, Abhradeep Thakurta, Nicolas Papernot, and Nicholas
  Carlini.
\newblock Adversary instantiation: Lower bounds for differentially private
  machine learning.
\newblock In {\em Proceedings of the {IEEE} Symposium on Security and Privacy
  ({S\&P})}, San Francisco, CA, USA, May 2021.

\bibitem{olatunji2021releasing}
Iyiola~E Olatunji, Thorben Funke, and Megha Khosla.
\newblock Releasing graph neural networks with differential privacy guarantees.
\newblock {\em arXiv preprint arXiv:2109.08907}, 2021.

\bibitem{olatunji2021membership}
Iyiola~E Olatunji, Wolfgang Nejdl, and Megha Khosla.
\newblock Membership inference attack on graph neural networks.
\newblock In {\em 2021 Third IEEE International Conference on Trust, Privacy
  and Security in Intelligent Systems and Applications (TPS-ISA)}, pages
  11--20. IEEE, 2021.

\bibitem{papernot2016semi}
Nicolas Papernot, Mart{\'\i}n Abadi, Ulfar Erlingsson, Ian Goodfellow, and
  Kunal Talwar.
\newblock Semi-supervised knowledge transfer for deep learning from private
  training data.
\newblock {\em arXiv preprint arXiv:1610.05755}, 2016.

\bibitem{NEURIPS2019_9015}
Adam Paszke, Sam Gross, Francisco Massa, Adam Lerer, James Bradbury, Gregory
  Chanan, Trevor Killeen, Zeming Lin, Natalia Gimelshein, Luca Antiga, Alban
  Desmaison, Andreas Kopf, Edward Yang, Zachary DeVito, Martin Raison, Alykhan
  Tejani, Sasank Chilamkurthy, Benoit Steiner, Lu~Fang, Junjie Bai, and Soumith
  Chintala.
\newblock Pytorch: An imperative style, high-performance deep learning library.
\newblock In H.~Wallach, H.~Larochelle, A.~Beygelzimer, F.~d\textquotesingle
  Alch\'{e}-Buc, E.~Fox, and R.~Garnett, editors, {\em Advances in Neural
  Information Processing Systems 32}, pages 8024--8035. Curran Associates,
  Inc., 2019.

\bibitem{qiu2018deepinf}
Jiezhong Qiu, Jian Tang, Hao Ma, Yuxiao Dong, Kuansan Wang, and Jie Tang.
\newblock Deepinf: Social influence prediction with deep learning.
\newblock In {\em Proceedings of the 24th ACM SIGKDD International Conference
  on Knowledge Discovery \& Data Mining}, pages 2110--2119, 2018.

\bibitem{raskhodnikova2016differentially}
Sofya Raskhodnikova and Adam Smith.
\newblock Differentially private analysis of graphs.
\newblock {\em Encyclopedia of Algorithms}, 2016.

\bibitem{sajadmanesh2021locally}
Sina Sajadmanesh and Daniel Gatica-Perez.
\newblock Locally private graph neural networks.
\newblock In {\em Proceedings of the 2021 ACM SIGSAC Conference on Computer and
  Communications Security}, pages 2130--2145, 2021.

\bibitem{szegedy2015going}
Christian Szegedy, Wei Liu, Yangqing Jia, Pierre Sermanet, Scott Reed, Dragomir
  Anguelov, Dumitru Erhan, Vincent Vanhoucke, and Andrew Rabinovich.
\newblock Going deeper with convolutions.
\newblock In {\em Proceedings of the IEEE conference on computer vision and
  pattern recognition}, pages 1--9, 2015.

\bibitem{traud2012social}
Amanda~L Traud, Peter~J Mucha, and Mason~A Porter.
\newblock Social structure of facebook networks.
\newblock {\em Physica A: Statistical Mechanics and its Applications},
  391(16):4165--4180, 2012.

\bibitem{velivckovic2017graph}
Petar Veli {\v {c}}~kovi {\'c}, Guillem Cucurull, Arantxa Casanova, Adriana
  Romero, Pietro Lio, and Yoshua Bengio.
\newblock Graph attention networks.
\newblock {\em arXiv preprint arXiv:1710.10903}, 2017.

\bibitem{wang2021review}
Jianian Wang, Sheng Zhang, Yanghua Xiao, and Rui Song.
\newblock A review on graph neural network methods in financial applications.
\newblock {\em arXiv preprint arXiv:2111.15367}, 2021.

\bibitem{pmlr-v89-wang19b}
Yu-Xiang Wang, Borja Balle, and Shiva~Prasad Kasiviswanathan.
\newblock Subsampled renyi differential privacy and analytical moments
  accountant.
\newblock In Kamalika Chaudhuri and Masashi Sugiyama, editors, {\em Proceedings
  of the Twenty-Second International Conference on Artificial Intelligence and
  Statistics}, volume~89 of {\em Proceedings of Machine Learning Research},
  pages 1226--1235. PMLR, 16--18 Apr 2019.

\bibitem{wu2021linkteller}
Fan Wu, Yunhui Long, Ce~Zhang, and Bo~Li.
\newblock Linkteller: Recovering private edges from graph neural networks via
  influence analysis.
\newblock {\em arXiv preprint arXiv:2108.06504}, 2021.

\bibitem{wu2019simplifying}
Felix Wu, Amauri Souza, Tianyi Zhang, Christopher Fifty, Tao Yu, and Kilian
  Weinberger.
\newblock Simplifying graph convolutional networks.
\newblock In {\em International conference on machine learning}, pages
  6861--6871. PMLR, 2019.

\bibitem{wu2020comprehensive}
Zonghan Wu, Shirui Pan, Fengwen Chen, Guodong Long, Chengqi Zhang, and S~Yu
  Philip.
\newblock A comprehensive survey on graph neural networks.
\newblock {\em IEEE Transactions on Neural Networks and Learning Systems},
  2020.

\bibitem{xu2018how}
Keyulu Xu, Weihua Hu, Jure Leskovec, and Stefanie Jegelka.
\newblock How powerful are graph neural networks?
\newblock In {\em International Conference on Learning Representations}, 2019.

\bibitem{pmlr-v80-xu18c}
Keyulu Xu, Chengtao Li, Yonglong Tian, Tomohiro Sonobe, Ken-ichi Kawarabayashi,
  and Stefanie Jegelka.
\newblock Representation learning on graphs with jumping knowledge networks.
\newblock In Jennifer Dy and Andreas Krause, editors, {\em Proceedings of the
  35th International Conference on Machine Learning}, volume~80 of {\em
  Proceedings of Machine Learning Research}, pages 5453--5462,
  Stockholmsmässan, Stockholm Sweden, 10--15 Jul 2018. PMLR.

\bibitem{ying2018graph}
Rex Ying, Ruining He, Kaifeng Chen, Pong Eksombatchai, William~L Hamilton, and
  Jure Leskovec.
\newblock Graph convolutional neural networks for web-scale recommender
  systems.
\newblock In {\em Proceedings of the 24th ACM SIGKDD International Conference
  on Knowledge Discovery \& Data Mining}, pages 974--983, 2018.

\bibitem{you2020design}
Jiaxuan You, Zhitao Ying, and Jure Leskovec.
\newblock Design space for graph neural networks.
\newblock {\em Advances in Neural Information Processing Systems}, 33, 2020.

\bibitem{opacus}
Ashkan Yousefpour, Igor Shilov, Alexandre Sablayrolles, Davide Testuggine,
  Karthik Prasad, Mani Malek, John Nguyen, Sayan Ghosh, Akash Bharadwaj,
  Jessica Zhao, Graham Cormode, and Ilya Mironov.
\newblock Opacus: {U}ser-friendly differential privacy library in {PyTorch}.
\newblock {\em arXiv preprint arXiv:2109.12298}, 2021.

\bibitem{zhang2018link}
Muhan Zhang and Yixin Chen.
\newblock Link prediction based on graph neural networks.
\newblock {\em Advances in Neural Information Processing Systems},
  31:5165--5175, 2018.

\bibitem{9039675}
Z.~Zhang, P.~Cui, and W.~Zhu.
\newblock Deep learning on graphs: A survey.
\newblock {\em IEEE Transactions on Knowledge \& Data Engineering},
  34(01):249--270, jan 2022.

\bibitem{ijcai2021-516}
Zaixi Zhang, Qi~Liu, Zhenya Huang, Hao Wang, Chengqiang Lu, Chuanren Liu, and
  Enhong Chen.
\newblock Graphmi: Extracting private graph data from graph neural networks.
\newblock In Zhi-Hua Zhou, editor, {\em Proceedings of the Thirtieth
  International Joint Conference on Artificial Intelligence, {IJCAI-21}}, pages
  3749--3755. International Joint Conferences on Artificial Intelligence
  Organization, 8 2021.

\bibitem{277160}
Zhikun Zhang, Min Chen, Michael Backes, Yun Shen, and Yang Zhang.
\newblock Inference attacks against graph neural networks.
\newblock In {\em 31st USENIX Security Symposium (USENIX Security 22)}, Boston,
  MA, August 2022. USENIX Association.

\bibitem{zhou2020graph}
Jie Zhou, Ganqu Cui, Shengding Hu, Zhengyan Zhang, Cheng Yang, Zhiyuan Liu,
  Lifeng Wang, Changcheng Li, and Maosong Sun.
\newblock Graph neural networks: A review of methods and applications.
\newblock {\em AI Open}, 1:57--81, 2020.

\end{thebibliography}

\appendix
\balance
\section{Deferred Theoretical Arguments}\label{sec:proof}

\subsection{Proof of \autoref{thm:privacy}}

To prove \autoref{thm:privacy}, we first establish the following lemma.

\begin{lemma}\label{lem:e-sensitivity}
	Let $\AGG{(\mathbf{X}, \mathbf{A})} = \mathbf{A}^T\cdot\mathbf{X}$ be the summation aggregation function. 
	Assume that the input feature matrix $\mathbf{X}$ is row-normalized, such that $\forall v\in\mathcal{V}: \norm{\mathbf{X}_v}= 1$.
	Then, the edge-level sensitivity of the aggregation function is $\Delta_{\AGG} = 1$.
\end{lemma}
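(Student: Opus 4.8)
The plan is to reduce the sensitivity computation to tracking precisely which entries of the output matrix $\mathbf{A}^T\cdot\mathbf{X}$ change when a single edge is toggled. First I would rewrite the aggregation row-wise: the $v$-th row of $\AGG(\mathbf{X},\mathbf{A})$ is $[\mathbf{A}^T\cdot\mathbf{X}]_v = \sum_{u\in\N_v}\mathbf{X}_u$, i.e.\ the sum of the feature vectors over the inbound neighbors of $v$. This row-wise view is what makes the argument transparent, because an edge only enters the aggregation of its destination node.

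Next I would fix two edge-level adjacent graphs $\mathcal{G}$ and $\mathcal{G}'$ whose adjacency matrices $\mathbf{A}$ and $\mathbf{A}'$ differ in exactly one entry $\mathbf{A}_{i,j}$ (the directed edge from $i$ to $j$), and emphasize that the node features $\mathbf{X}$ are shared across the two graphs, since edge-level adjacency leaves features untouched. Toggling $\mathbf{A}_{i,j}$ affects only column $j$ of $\mathbf{A}$, hence only row $j$ of $\mathbf{A}^T\cdot\mathbf{X}$; concretely, that row changes by exactly $\pm\mathbf{X}_i$, while every other row is identical. Thus the difference $\AGG(\mathbf{X},\mathbf{A}) - \AGG(\mathbf{X},\mathbf{A}')$ has a single nonzero row equal to $\pm\mathbf{X}_i$, and its norm equals $\norm{\mathbf{X}_i} = 1$ by the row-normalization hypothesis. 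Since this value is attained for any such adjacent pair, the bound is tight and the sensitivity is exactly $1$.

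The only real subtlety — and the step I would state most carefully — is which norm defines $\Delta_{\AGG}$ when the output is a matrix rather than a vector. Because the Gaussian mechanism in \autoref{alg:pma} adds independent noise to each row and, as shown above, only one row ever changes under an edge perturbation, the quantity governing the noise calibration is the $L_2$ norm of that single perturbed row, which coincides with the Frobenius norm of the full difference matrix. I would make this alignment explicit so that the value $\Delta_{\AGG}=1$ plugs directly into the Gaussian-mechanism RDP bound $\epsilon = \nicefrac{\Delta_f^2\alpha}{2\sigma^2}$ and thereby feeds the per-hop cost used in \autoref{thm:privacy}. No heavy computation is required; the argument is entirely structural, and the normalization assumption is exactly what pins the maximum row norm to $1$.
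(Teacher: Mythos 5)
Your proposal is correct and follows essentially the same route as the paper's proof: both identify that toggling the single directed edge $(i,j)$ changes only row $j$ of $\mathbf{A}^T\mathbf{X}$ by exactly $\pm\mathbf{X}_i$, so the Frobenius norm of the difference reduces to $\norm{\mathbf{X}_i}=1$ under row-normalization. Your explicit remark that the Frobenius norm of the difference matrix coincides with the $L_2$ norm of the single perturbed row is a useful clarification of what the paper does implicitly, but it is not a different argument.
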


\begin{proof}
	Let $\mathbf{A}$ and $\mathbf{A}^\prime$ be the adjacency matrices of two arbitrary edge-level adjacent graphs. 
	Therefore, there exist two nodes $u$ and $v$ such that:
	\begin{equation}
		\begin{cases}
			\mathbf{A}^\prime_{i,j}\neq\mathbf{A}_{i,j}, \quad \text{ if } i = u \text{ and } j = v,\\
			\mathbf{A}^\prime_{i,j}=\mathbf{A}_{i,j}, \quad  \text{ otherwise.}  \\
		\end{cases}
	\end{equation}
	Without loss of generality, we can assume that $\mathbf{A}_{v,u} = 1$ and $\mathbf{A}^\prime_{v,u} = 0$.
	The goal is to bound the following quantity:
	\[\norm[F]{\AGG{(\mathbf{X}, \mathbf{A})} - \AGG{(\mathbf{X}, \mathbf{A}^\prime)}}.\]
	Let $\mathbf{M} = \AGG{(\mathbf{X}, \mathbf{A})}$ be the aggregation function output on $\mathbf{A}$, and 
	\[\mathbf{M}_i=\sum_{j=1}^{N}\mathbf{A}_{j,i}\mathbf{X}_j,\] 
	be the $i\text{-th}$ row of $\mathbf{M}$ corresponding to the aggregated vector for the $i\text{-th}$ node. Analogously, let $\mathbf{M}^\prime = \AGG{(\mathbf{X}, \mathbf{A}^\prime)}$. Then:
	\begin{align*}
		\norm[F]{\AGG{(\mathbf{X}, \mathbf{A})} &- \AGG{(\mathbf{X}, \mathbf{A}^\prime)}} 
		= \norm[F]{\mathbf{M}-\mathbf{M}^\prime} \\
		&= ({\sum_{i=1}^{N}\norm{\mathbf{M}_i-\mathbf{M}_i^\prime}^2})^{\nicefrac{1}{2}} \\
		&= \left({\sum_{i=1}^{N}\norm{\sum_{j=1}^{N}(\mathbf{A}_{j,i}\mathbf{X}_j-\mathbf{A}^\prime_{j,i}\mathbf{X}_j)}^2}\right)^{\nicefrac{1}{2}} \\
		&= \left({\norm{\mathbf{A}_{v,u}\mathbf{X}_v-\mathbf{A}^\prime_{v,u}\mathbf{X}_v}^2}\right)^{\nicefrac{1}{2}}\\
		&= \norm{(\mathbf{A}_{v,u}-\mathbf{A}^\prime_{v,u})\mathbf{X}_v} \\
		&= \norm{\mathbf{X}_v} \\
		&= 1,
	\end{align*}
	which concludes the proof.
\end{proof}

We can now prove \autoref{thm:privacy}.

\begin{proof}
	The PMA mechanism applies the Gaussian mechanism on the output of the summation aggregation function $\AGG{(\mathbf{X}, \mathbf{A})} = \mathbf{A}^T\cdot\mathbf{X}$. Based on \autoref{lem:e-sensitivity}, the edge-level sensitivity of $\AGG{(\cdot)}$ is 1. Therefore, according to Corollary~3 of~\cite{mironov2017renyi}, each individual application of the Gaussian mechanism is $(\alpha, \nicefrac{\alpha}{2\sigma^2})\text{-RDP}$. As PMA can be seen as an adaptive composition of $K$ such mechanisms, based on Proposition~1 of~\cite{mironov2017renyi}, the total privacy cost is $(\alpha, \nicefrac{K\alpha}{2\sigma^2})\text{-RDP}$.
\end{proof}

\subsection{Proof of \autoref{prop:gap-edge}}

\begin{proof}
	Under edge-level DP, only the adjacency information is protected. In \autoref{alg:gap}, the only step where the graph's adjacency is used is the application of the PMA mechanism (step 4), which according to \autoref{thm:privacy} is $(\alpha, \nicefrac{K\alpha}{2\sigma^2})\text{-RDP}$. 
	Since EM does not use the graph's edges and the classification module only post-process the private aggregated features without accessing the edges again, the total privacy cost remains $(\alpha, \nicefrac{K\alpha}{2\sigma^2})\text{-RDP}$. Therefore, according to \autoref{prop:rdptodp} it is equivalent to edge-level $(\epsilon, \delta)$-DP with $\epsilon=\frac{K\alpha}{2\sigma^2} + \frac{\log(1/\delta)}{\alpha-1}$. Minimizing this expression over $\alpha>1$ 
	gives $\epsilon = \frac{K}{2\sigma^2} + \nicefrac{\sqrt{2K\log{(1/\delta)}}}{\sigma}$.
\end{proof}

\subsection{Proof of \autoref{thm:nodeprivacy}}
We first prove \autoref{lem:n-sensitivity-small} and \autoref{prop:agg-rdp}, and then prove \autoref{thm:nodeprivacy}.

\begin{lemma}\label{lem:n-sensitivity-small}
    Given any graph $\mathcal{G}=(\mathcal{V}, \mathcal{E}, \mathbf{X})$, let 
    \[\agg \left(\{\mathbf{X}_u: \forall u \in \N_v\}\right) = \sum_{u\in\N_v}\mathbf{X}_u\]
    be the summation aggregation function over the neighborhood $\N_v$ of any arbitrary node $v\in\mathcal{V}$. 
	Assume that the input feature matrix $\mathbf{X}$ is row-normalized, such that $\forall v\in\mathcal{V}: \norm{\mathbf{X}_v}= 1$. 
	Then, the node-level sensitivity of $\agg(.)$ is $\Delta_{\agg} = 1$.
\end{lemma}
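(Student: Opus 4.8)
The plan is to bound, for an arbitrary surviving node $v$, how much the single-node aggregate $\sum_{u\in\N_v}\mathbf{X}_u$ can change when passing between two node-level adjacent graphs, and then take the worst case. First I would let $\mathcal{G}=(\mathcal{V},\mathcal{E},\mathbf{X})$ and $\mathcal{G}^\prime$ be node-level adjacent and, without loss of generality, assume $\mathcal{G}^\prime$ is obtained from $\mathcal{G}$ by deleting a single node $w$ together with its feature vector and all its incident edges. I would then fix any node $v\neq w$ that is present in both graphs (if $v=w$, its aggregate is simply undefined on one side and need not be considered), and write $\N_v$ and $\N_v^{\prime}$ for the in-neighborhoods of $v$ in $\mathcal{G}$ and $\mathcal{G}^\prime$, respectively.

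The key structural observation is that deleting $w$ can only remove the single element $w$ from $v$'s neighborhood and cannot alter the membership of any other node: removing $w$ deletes only edges incident to $w$, so the sole edge into $v$ that can disappear is $(w,v)$. Hence $\N_v^{\prime}=\N_v\setminus\{w\}$, leaving two cases. If $w\notin\N_v$ the two aggregates coincide and the difference is zero; if $w\in\N_v$ the two sums differ by exactly the single term $\mathbf{X}_w$. Invoking the row-normalization hypothesis $\norm{\mathbf{X}_w}=1$, I would conclude
\[
\norm{\sum_{u\in\N_v}\mathbf{X}_u-\sum_{u\in\N_v^{\prime}}\mathbf{X}_u}=\norm{\mathbf{X}_w}=1
\]
in the former case and $0$ in the latter, so $\Delta_{\agg}\le 1$. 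To confirm the bound is tight rather than merely an upper estimate, I would exhibit a two-node graph with a single edge $(w,v)$ and any unit feature $\mathbf{X}_w$: deleting $w$ changes $v$'s aggregate by exactly $\mathbf{X}_w$, realizing the value $1$ and giving $\Delta_{\agg}=1$.

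I do not expect a genuine technical obstacle; once the neighborhood bookkeeping is in place the estimate is a one-line telescoping. The only point deserving care is conceptual rather than computational: one must keep the per-node aggregate $\agg$ distinct from the full matrix $\AGG$. Deleting a node perturbs up to $D$ rows of $\AGG$ simultaneously, so the Frobenius-norm sensitivity of the whole matrix would be $\sqrt{D}$ rather than $1$; this lemma deliberately isolates the contribution of a single row, and the degree factor $D$ is reintroduced afterward, when these per-row perturbations are combined to obtain the node-level RDP bound of \autoref{thm:nodeprivacy}.
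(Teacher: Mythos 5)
Your proof is correct and follows essentially the same route as the paper's: the paper fixes a node-level adjacent pair differing by one node $q$ (added rather than deleted, which is equivalent by the symmetry of the adjacency relation), splits into the cases $q\in\N'_v$ and $q\notin\N'_v$, and concludes via $\norm{\mathbf{X}_q}=1$ exactly as you do. Your added tightness witness and the remark distinguishing the per-node $\agg$ (sensitivity $1$) from the full matrix $\AGG$ (Frobenius sensitivity $\sqrt{D}$, recovered later in the RDP composition) are correct supplements but not a different argument.
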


\begin{proof}
	Consider a node-level adjacent graph $\mathcal{G}^\prime=(\mathcal{V}^\prime, \mathcal{E}^\prime, \mathbf{X}^\prime)$ formed by adding a single node $q$ to $\mathcal{G}$. Hence, we have  $\mathcal{V}^\prime = \mathcal{V}\cup\{q\}$, and $\mathbf{X}^\prime_v = \mathbf{X}_v$ for every node $v\in\mathcal{V}$. Let $\mathbf{A}$ and $\mathbf{A}^\prime$ be the adjacency matrices of $\mathcal{G}$ and $\mathcal{G}^\prime$ respectively.
	The goal is to bound the following:
	\begin{equation}\label{eq:node-sens-agg}
		\norm[2]{\agg \left(\{\mathbf{X}_u: \forall u \in \N_v\}\right) - \agg \left(\{\mathbf{X}^\prime_u: \forall u \in \N^\prime_v\}\right)} \le 1.
	\end{equation}
	where $\N_v = \{u: \mathbf{A}_{u,v} = 1 \}$ and $\N^\prime_v = \{u: \mathbf{A}^\prime_{u,v} = 1\}$ are the adjacent nodes to $v$ in $\mathcal{G}$ and $\mathcal{G}^\prime$, respectively. Fixing any arbitrary node $v\in\mathcal{V}$, we have the following two cases:
	\begin{enumerate}
		\item If $q\in\N^\prime_v$, then we have $\N_v = \N^\prime_v \setminus \{q\}$. Therefore:
		\begin{align*}
			\norm[2]{\agg \left(\{\mathbf{X}_u: \forall u \in \N_v\}\right) &- \agg \left(\{\mathbf{X}^\prime_u: \forall u \in \N^\prime_v\}\right)} \\
			&= \norm[2]{\sum_{u\in\N_v}\mathbf{X}_u - \sum_{u\in\N^\prime_v}\mathbf{X}^\prime_u} \\
			& = \norm[2]{\mathbf{X}_q} = 1.
		\end{align*}

		\item If $q\notin\N^\prime_v$, then we have $\N_v = \N^\prime_v$. Therefore:
		\begin{align*}
			\norm[2]{\agg \left(\{\mathbf{X}_u: \forall u \in \N_v\}\right) &- \agg \left(\{\mathbf{X}^\prime_u: \forall u \in \N^\prime_v\}\right)} \\
			&= \norm[2]{\sum_{u\in\N_v}\mathbf{X}_u - \sum_{u\in\N^\prime_v}\mathbf{X}^\prime_u} = 0.
		\end{align*}
	\end{enumerate}
	\autoref{eq:node-sens-agg} follows from the above two cases.
\end{proof}

\begin{lemma}\label{prop:agg-rdp}
    Given any graph $\mathcal{G}=(\mathcal{V}, \mathcal{E}, \mathbf{X})$ with adjacency matrix $\mathbf{A}$ and maximum degree bounded above by some constant $D>0$, assume that the feature matrix $\mathbf{X}$ is row-normalized, such that $\forall v\in\mathcal{V}: \norm{\mathbf{X}_v}= 1$.
    Let $\agg \left(\{\mathbf{X}_u: \forall u \in \N_v\}\right) = \sum_{u\in\N_v}\mathbf{X}_u$ be the summation aggregation function over the neighborhood $\N_v$ of any arbitrary node $v\in\mathcal{V}$, and
	$\widetilde{\AGG}{(\mathbf{X}, \mathbf{A})}$ be a noisy aggregation mechanism which applies the Gaussian mechanism independently on the aggregated vector of every individual node as:
	\begin{equation*}
	    \widetilde{\AGG}{(\mathbf{X}, \mathbf{A})} = \left[ \agg \left(\{\mathbf{X}_u: \forall u \in \N_v\}\right) + \mathcal{N}(\sigma^2\mathbb{I}) : \forall v\in\mathcal{V} \right]^T.
	\end{equation*} 
	 Then $\widetilde{\AGG}(.)$ is $(\alpha, \nicefrac{D\alpha}{2\sigma^2})\text{-RDP}$.
\end{lemma}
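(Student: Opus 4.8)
The plan is to express $\widetilde{\AGG}$ as a single Gaussian mechanism applied to the stacked aggregation $\AGG(\mathbf{X},\mathbf{A}) = \mathbf{A}^T\cdot\mathbf{X}$ and then to bound its node-level $L_2$ sensitivity, reusing the per-node bound from \autoref{lem:n-sensitivity-small}. Concretely, I would fix two node-level adjacent graphs and, without loss of generality, take $\mathcal{G}^\prime$ to be obtained from $\mathcal{G}$ by inserting one node $q$ together with its attached edges, so $\mathcal{V}^\prime = \mathcal{V}\cup\{q\}$ and $\mathbf{X}^\prime_v=\mathbf{X}_v$ for all $v\in\mathcal{V}$. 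The quantity to control is $\norm[F]{\AGG(\mathbf{X},\mathbf{A}) - \AGG(\mathbf{X}^\prime,\mathbf{A}^\prime)}$ restricted to the rows shared by both graphs, which is exactly the $L_2$ sensitivity consumed by the Gaussian-mechanism RDP bound recalled in the background.

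First I would count how many shared rows can change when $q$ is inserted. A node $v\in\mathcal{V}$ sees its neighborhood altered only if $q\in\N^\prime_v$, i.e. only if $q$ carries an outbound edge toward $v$; since the maximum degree is bounded by $D$, node $q$ has at most $D$ such edges, so at most $D$ of the shared rows are affected. For each affected $v$, \autoref{lem:n-sensitivity-small} gives $\norm[2]{\agg(\{\mathbf{X}_u:u\in\N_v\}) - \agg(\{\mathbf{X}^\prime_u:u\in\N^\prime_v\})}\le 1$, while every unaffected row contributes zero. Because the Gaussian noise is drawn independently per row, the stacked output is a single spherical Gaussian whose mean shift has squared Frobenius norm equal to the sum of the per-row squared shifts; summing over the at most $D$ affected rows yields $\norm[F]{\AGG(\mathbf{X},\mathbf{A}) - \AGG(\mathbf{X}^\prime,\mathbf{A}^\prime)} \le \sqrt{D}$.

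With the node-level $L_2$ sensitivity established as $\Delta_{\widetilde{\AGG}}\le\sqrt{D}$, I would finish by invoking the Gaussian-mechanism RDP guarantee from the background (Corollary~3 of~\cite{mironov2017renyi}): adding noise of variance $\sigma^2$ to a function of sensitivity $\Delta$ is $(\alpha,\nicefrac{\Delta^2\alpha}{2\sigma^2})$-RDP, so with $\Delta=\sqrt{D}$ we obtain $(\alpha,\nicefrac{D\alpha}{2\sigma^2})$-RDP as claimed.

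The main obstacle I anticipate is handling the dimension mismatch cleanly: $\widetilde{\AGG}(\mathbf{X}^\prime,\mathbf{A}^\prime)$ contains an extra fresh noisy row for the inserted node $q$ that has no counterpart in $\widetilde{\AGG}(\mathbf{X},\mathbf{A})$. The key is to argue that this extra coordinate does not increase the Rényi divergence beyond the shared-row bound --- for instance by decomposing the divergence over the independent per-node Gaussians and summing only over the common node set --- so that the effective sensitivity really is the quadrature combination $\sqrt{D}$ of the at most $D$ unit per-row shifts, rather than $D$. It is precisely this quadrature (enabled by the per-row independence of the noise) that turns $D$ unit-sensitivity perturbations into an aggregate sensitivity of $\sqrt{D}$, and hence produces the factor $D$ (not $D^2$) in the final RDP bound.
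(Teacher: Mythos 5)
Your proposal is correct, but it reaches the bound by a genuinely different route than the paper. The paper treats $\widetilde{\AGG}$ as $N$ separate noisy queries (one per node), observes that each individual query has node-level sensitivity $1$ and is therefore $(\alpha,\nicefrac{\alpha}{2\sigma^2})$-RDP, notes that the inserted node's data enters at most $D$ of these queries, and then invokes RDP sequential composition (Proposition~1 of Mironov) $D$ times to get $(\alpha,\nicefrac{D\alpha}{2\sigma^2})$-RDP. You instead view the stacked output as a \emph{single} Gaussian mechanism and bound its joint node-level $L_2$ sensitivity directly: at most $D$ rows shift, each by at most $1$, so the Frobenius-norm shift is at most $\sqrt{D}$ by quadrature, and one application of the Gaussian-mechanism RDP formula gives $\nicefrac{(\sqrt{D})^2\alpha}{2\sigma^2}=\nicefrac{D\alpha}{2\sigma^2}$. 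The two arguments coincide numerically because, for independent Gaussian noise, composing $k$ unit-sensitivity queries is exactly equivalent to a single query of $L_2$ sensitivity $\sqrt{k}$; your version makes this explicit and dispenses with the composition theorem entirely, which is arguably cleaner and pinpoints why the cost scales as $D$ rather than $D^2$. Your approach also forces the dimension-mismatch issue into the open --- $\widetilde{\AGG}(\mathbf{X}^\prime,\mathbf{A}^\prime)$ has an extra noisy row for the inserted node $q$ with no counterpart in the other world --- which the paper's composition argument silently glosses over; your proposed fix (factorize the R\'enyi divergence over the independent per-row Gaussians and compare only the marginals on the common node set) is the standard resolution and is consistent with the convention the paper implicitly adopts. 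One small point to make explicit in a final write-up: the relevant degree bound is on the \emph{out}-degree of the inserted node (the number of nodes $v$ with $q\in\N^\prime_v$), which is how both you and the paper read the hypothesis.
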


\begin{proof}
    According to~\autoref{lem:n-sensitivity-small}, the node-level sensitivity of $\agg \left(\{\mathbf{X}_u: \forall u \in \N_v\}\right)$ is 1, and thus each individual noisy aggregation query is $(\alpha, \nicefrac{\alpha}{2\sigma^2})\text{-RDP}$. Although $\widetilde{\AGG}$ is composed of $N=\lvert\mathcal{V}\rvert$ such queries in total (one noisy aggregation per node), as $\mathcal{G}$'s maximum degree is bounded above by $D$, the embedding $\mathbf{X}_u$ of each node $u$ only contributes to maximum $D$ out of $N$ queries. As these $N$ queries are chosen non-adaptively and the noise of the Gaussian mechanism is independently drawn for each query, the maximum privacy cost of $\widetilde{\AGG}(.)$ is equivalent to $D$ compositions of $(\alpha, \nicefrac{\alpha}{2\sigma^2})\text{-RDP}$ mechanisms, which based on Proposition~1 of~\cite{mironov2017renyi} is $(D\alpha, \nicefrac{\alpha}{2\sigma^2})\text{-RDP}$.
\end{proof}

Now, we prove \autoref{thm:nodeprivacy}.

\begin{proof}
	At each step of the PMA mechanism, the Gaussian mechanism is applied on every output row of the summation aggregation function $\AGG{(\mathbf{X}, \mathbf{A})} = \mathbf{A}^T\cdot\mathbf{X}$. Based on \autoref{prop:agg-rdp}, this mechanism is $(\alpha, \nicefrac{\alpha D}{2\sigma^2})\text{-RDP}$. As PMA can be seen as an adaptive composition of $K$ such mechanisms, based on Proposition~1 of~\cite{mironov2017renyi}, the total privacy cost is $(\alpha, \nicefrac{\alpha DK}{2\sigma^2})\text{-RDP}$.
\end{proof}


\subsection{Proof of \autoref{prop:gap-node}}

\begin{proof}
	Under node-level DP, all the information pertaining to an individual node, including its features, label, and edges, are private. The first step of \autoref{alg:gap} privately processes the node features and labels so as to satisfy $(\alpha, \epsilon_1(\alpha))$-RDP. Steps 2 and 3 of the algorithm, however, expose the private node features, but then they are processed by steps 4 and 5, which are $(\alpha, \nicefrac{DK\alpha}{2\sigma^2})\text{-RDP}$ (according to \autoref{thm:nodeprivacy}) and $(\alpha, \epsilon_5(\alpha))$-RDP, respectively. As a result, \autoref{alg:gap} can be seen as an adaptive composition of an $(\alpha, \epsilon_1(\alpha))$-RDP mechanism, an $(\alpha, \nicefrac{DK\alpha}{2\sigma^2})\text{-RDP}$ mechanism, and an $(\alpha, \epsilon_5(\alpha))$-RDP mechanism.
	Therefore, based on Proposition~1 of~\cite{mironov2017renyi}, the total node-level privacy cost of \autoref{alg:gap} is $(\alpha, \epsilon_1(\alpha)+\nicefrac{DK\alpha}{2\sigma^2}+\epsilon_5(\alpha))\text{-RDP}$, which ensures $(\epsilon_1(\alpha) + \epsilon_5(\alpha) + \frac{DK\alpha}{2\sigma^2} + \frac{\log(1/\delta)}{\alpha-1}, \delta)$-DP based on \autoref{prop:rdptodp}.
\end{proof}

\end{document}